\def\eqref#1{equation~\ref{#1}}
\def\1{\bm{1}}
\DeclareMathAlphabet{\mathsfit}{\encodingdefault}{\sfdefault}{m}{sl}
\SetMathAlphabet{\mathsfit}{bold}{\encodingdefault}{\sfdefault}{bx}{n}
\DeclareMathOperator*{\argmax}{arg\,max}
\newcommand{\tickYes}{\ding{51}}
\newcommand{\tickNo}{\ding{55}}
\theoremstyle{plain}
\theoremstyle{definition}
\newtheorem{definition}{Definition}
\newcommand\numberthis{\addtocounter{equation}{1}\tag{\theequation}}
\newenvironment{sproof}{%
  \proof}{\endproof}
\icmltitlerunning{Variational Curriculum Reinforcement Learning for Unsupervised Discovery of Skills}
\begin{document}

\twocolumn[
\icmltitle{Variational Curriculum Reinforcement Learning \\for Unsupervised Discovery of Skills}



\icmlsetsymbol{equal}{*}

\begin{icmlauthorlist}
\icmlauthor{Seongun Kim}{equal,kaist}
\icmlauthor{Kyowoon Lee}{equal,unist}
\icmlauthor{Jaesik Choi}{kaist}
\end{icmlauthorlist}

\icmlaffiliation{kaist}{Kim Jaechul Graduate School of AI, KAIST}
\icmlaffiliation{unist}{Department of Computer Science and Engineering, UNIST}

\icmlcorrespondingauthor{Jaesik Choi}{jaesik.choi@kaist.ac.kr}

\icmlkeywords{Machine Learning, ICML}

\vskip 0.3in
]



\printAffiliationsAndNotice{\icmlEqualContribution} 

\begin{abstract}
Mutual information-based reinforcement learning (RL) has been proposed as a promising framework for retrieving complex skills autonomously without a task-oriented reward function through mutual information (MI) maximization or variational empowerment.
However, learning complex skills is still challenging, due to the fact that the order of training skills can largely affect sample efficiency. Inspired by this, we recast variational empowerment as curriculum learning in goal-conditioned RL with an intrinsic reward function, which we name Variational Curriculum RL (VCRL). From this perspective, we propose a novel approach to unsupervised skill discovery based on information theory, called Value Uncertainty Variational Curriculum (VUVC). We prove that, under regularity conditions, VUVC accelerates the increase of entropy in the visited states compared to the uniform curriculum. We validate the effectiveness of our approach on complex navigation and robotic manipulation tasks in terms of sample efficiency and state coverage speed. We also demonstrate that the skills discovered by our method successfully complete a real-world robot navigation task in a zero-shot setup and that incorporating these skills with a global planner further increases the performance.
\end{abstract}

\section{Introduction}

Intelligent creatures are able to efficiently explore the environments and learn useful skills in the absence of external supervision. By utilizing these skills, they can quickly accomplish tasks when they are later faced with specific tasks. To scale a learning agent to the real-world, it is crucial to achieve such ability of learning skills without supervision.
Recent studies on unsupervised RL suggest ways to alleviate the need for human effort.
Most of these approaches focus on reducing the burden of designing objective functions by incorporating intrinsic motivation objectives or leveraging concepts from information theory.
In this work, we further reconcile with the need not only to manually engineer objective functions but to craft the order of training skills.

Empowerment or MI-based RL \cite{klyubin2005empowerment, salge2014empowerment} has gained traction in recent years as a means of unsupervised skill discovery due to its intuitive interpretation and empirical successes \cite{eysenbach2018diversity, sharma2019dynamics, jabri2019unsupervised}. However, the common empowerment approach has been to either fix or parameterize the distribution of skills \cite{nair2018visual, pong2020skew, campos2020explore}. The efficiency of learning skills with respect to the number of required training samples is rather limited when the agent learns complex skills from a fixed skill distribution without an organized order.
The notion of \textit{curriculum} studies the effectiveness of the order of training skills. By selecting the order of appropriate skills, a learning agent may achieve a variety of complex skills \cite{florensa2018automatic, fang2019curriculum}. However, it is both necessary to define a set of tasks that can be used to generate curriculum \cite{klink2020self, zhang2020automatic} and specify a form of reward functions \cite{racaniere2019automated, ren2019exploration, narvekar2019learning}.

\let\thefootnote\relax\footnotetext{
\hspace{-\footnotesep-\footnotesep}
Codes are available at \href{https://github.com/seongun-kim/vcrl}{\url{github.com/seongun-kim/vcrl}}.
}

\begin{figure*}
    \centering
    \includegraphics[width=0.969\linewidth]{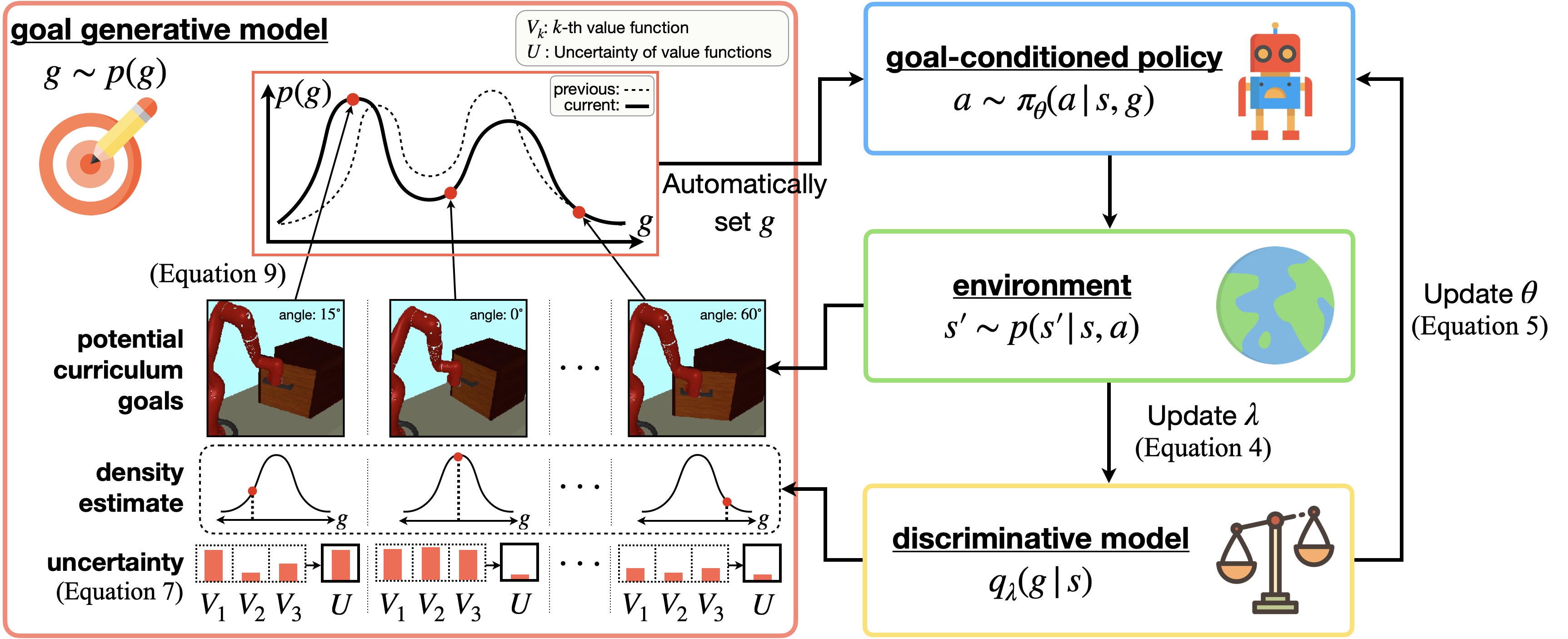}
    \caption{An overview of our proposed method, VUVC, under the unifying framework for curriculum learning in goal-conditioned RL. The value uncertainty proposes informative goals which would generate a stronger learning signal.
    The density estimate of potential curriculum goals indicates the novelty of the goal to the learning agent. The density estimate model is derived from the discriminative model, which is trained alongside the agent. This discriminative model provides intrinsic rewards to the agent. VUVC combines these two measures to construct a goal generative model, promoting unsupervised exploration of the entire state space by the agent.}
    \label{fig:main_figure}
\end{figure*}

To rectify this issue, we interpret empowerment as a unifying framework for curriculum learning in goal-conditioned RL (GCRL). Recasting variational empowerment as curriculum learning in GCRL with intrinsic reward function, interestingly our Variational Curriculum RL (VCRL) framework encapsulates most of the prior MI-based approaches \cite{nair2018visual, pong2020skew, campos2020explore}. In this regard, we derive a new approach to information-theoretic skill discovery, Value Uncertainty Variational Curriculum (VUVC) that allows us to automatically generate curriculum goals which maximize the expected information approximated as the uncertainty in predictions of an ensemble of value functions. We analyze asymptotic behavior of the entropy of visited states and provide the reasons 
why our method results in much faster coverage of the state space compared to existing methods.

The main contributions of this paper can be summarized as follows: (1) We provide the unifying framework VCRL encapsulating most of the prior MI-based approaches. (2) We propose VUVC, a value uncertainty based approach to information-theoretic skill discovery, aimed at automatically generating curricula for training skills and which is supported by theoretical justification. (3) We show the effectiveness of our approach on complex navigation, robotic manipulation in both configuration and image state space, and real-world robotic navigation tasks and illustrate that the skills discovered by our method can be further improved by incorporating them with a global planner.

\section{Background}
\label{sec:background}

\subsection{Goal-Conditioned Reinforcement Learning}
Goal-conditioned RL \cite{kaelbling1993learning} extends the standard RL framework to enable agents to accomplish a variety of tasks.
It solves the problem formulated as a goal-conditioned Markov decision process (MDP) which is defined as a tuple $\langle \mathcal{S}, \mathcal{G}, \mathcal{A}, P, R_g, \gamma \rangle$, where $\mathcal{S}$ is the set of states, $\mathcal{G}$ is the set of goals, $\mathcal{A}$ is the set of actions, $P: \mathcal{S} \times \mathcal{A} \times \mathcal{S} \to [0, +\infty)$ is the transition probability, $R_g: \mathcal{S} \times \mathcal{A}\to \mathbb{R}$ is the goal-conditioned reward function and $\gamma \in [0, 1]$ is the discount factor.
The objective of GCRL is to find the policy $\pi_\theta(a|s, g)$ parameterized with $\theta$ where $s \in \mathcal{S}, a \in \mathcal{A}, g \in \mathcal{G}$ and $\pi: \mathcal{S} \times \mathcal{A} \times \mathcal{S} \to [0, +\infty)$ that maximizes the universal value function \cite{schaul2015universal}:
\begin{align}
\label{eq:gcrl_objective}
    \theta
    &\gets \argmax_\theta V^{\pi_\theta}(s, g) \triangleq \nonumber \\
    &\mathbb{E}_{\substack{a_t \sim \pi_\theta(a_t | s_t, g), \\s_{t+1} \sim P(s_{t+1} | s_t, a_t)}} \Bigg[\sum_{t=0}^{\infty} \gamma^t R_g(s_t, a_t) \Big| s_0 = s \Bigg].
\end{align}


\begin{table*}[t]
    \centering
    \begin{tabular}{c|ccc}
        \toprule
        Methods & \textbf{$q_{\lambda}(g|s)$} & $p(g)$ & \makecell{Non-stationary \\ goal distribution} \\ 
        \midrule
        GCRL (w/ sparse reward) & $\frac{1}{Z} \exp(1-2\delta_g\mathcal{U}_{[s\pm\delta_g]})$ & $p^{\mathrm{target}}(g)$ & \tickNo \\
        GCRL (w/ dense reward) & $\mathcal{N}(s, \sigma^2 I)$ & $p^{\mathrm{target}}(g)$ & \tickNo \\
        EDL \cite{campos2020explore} & $\mathcal{N}(\mu(s), \sigma^2 I)$ & $p^{\mathrm{explored}}(g)$ & \tickNo \\
        RIG \cite{nair2018visual} & $\mathcal{N}(\mu(s), \sigma^2 I)$ & $p_t^{\mathrm{visited}}(g)$ & \tickYes \\
        Skew-Fit \cite{pong2020skew} & $\mathcal{N}(\mu(s), \sigma^2 I)$ & $\propto p_t^{\mathrm{visited}}(g)^{\alpha}$ & \tickYes \\
        VUVC (\textbf{ours}) & $\mathcal{N}(\mu(s), \sigma^2 I)$ & $\propto U(g)p_t^{\mathrm{visited}}(g)^{\alpha}$ & \tickYes \\
        \bottomrule
    \end{tabular}
    \caption{Variants of VCRL framework which encapsulate most of the prior MI-based methods, depending on the choice of a discriminator $q_\lambda(g|s)$, a goal generative model $p(g)$, and whether $p(g)$ is stationary or not, where both $q_\lambda(g|s)$ and $p(g)$ are components of the MI objective. The discriminator determines the shape of goal-conditioned reward functions including sparse and dense shapes.}
    \label{tab:vcrl}
\end{table*}

\subsection{Mutual Information and Empowerment}

In the context of RL, MI maximization such as empowerment generally means maximizing the mutual information between a function of states and a function of actions to learn latent-conditioned policies $\pi(a|s, z)$ where the latent code $z$ can be interpreted as a macro-action, skill or goal \cite{eysenbach2018diversity, sharma2019dynamics}. Empowerment maximizes the following MI objective:
%
\begin{align}
\label{eq:mutual_information}
    \mathcal{I}(s;z) 
    &= \mathcal{H}(s) - \mathcal{H}(s|z) \nonumber \\
    &= \mathcal{H}(z) - \mathcal{H}(z|s) \nonumber \\
    &= \mathbb{E}_{z\sim p(z), s\sim p(s|z)}\left[ \log p(z|s) - \log p(z) \right] \nonumber \\
    &\ge \mathbb{E}_{z\sim p(z), s\sim p(s|z)}\left[ \log q_\lambda(z|s) - \log p(z) \right], 
\end{align}
where $\mathcal{H}(\cdot)$ is the Shannon entropy, $p(z)$ is the prior distribution, and $ q_\lambda(z|s)$ represents the variational approximation for intractable posterior $p(z|s)$ parameterized with $\lambda$, often called a discriminator \cite{eysenbach2018diversity, sharma2019dynamics, campos2020explore}. This objective provides a way to train a policy that guides agents to explore diverse states by maximizing $\mathcal{H}(s)$ and makes the state $s$ distinguishable from the latent code $z$ by minimizing $\mathcal{H}(s|z)$.






\section{Variational Curriculum Reinforcement Learning}
\label{sec:vcrl}

To recast the aforementioned MI-based RL as VCRL, we first present that general GCRL methods optimize empowerment objective by formulating a discriminator to represent commonly used goal-conditioned reward functions. We then expand this setting to a curriculum learning framework with a goal generative model, which we name VCRL where Table \ref{tab:vcrl} summarizes variants of the VCRL framework.

Henceforth, we consider the latent code $z$ in Equation \ref{eq:mutual_information} as a goal $g$ and assume the goal space matches the state space, while VCRL framework is not limited to this assumption and trivially extended by introducing a state abstraction function \cite{ren2019exploration}. The objective now becomes equivalent to that of a GCRL where the resulting policy aims to reach $g$ \cite{pong2020skew, choi2021variational}. Given a policy $\pi_\theta(a|s, g)$ and a discriminator $q_\lambda(g|s)$, an objective of MI-based RL is to maximize a variational lower bound:
\begin{align}
\label{eq:mi_objective}
    \mathcal{F}(\theta, \lambda) = \mathbb{E}_{\substack{g \sim p(g), \\s \sim \rho^\pi(s|g)}} [\log q_\lambda(g|s) - \log p(g)],
\end{align}
where $\rho^\pi(s|g)$ is a stationary state distribution induced by the goal-conditioned policy $\pi(a|s, g)$ \cite{gregor2016variational, campos2020explore}. To solve this joint optimization problem, we iteratively fix one parameter and optimize the other one at each training epoch $i$:
\begin{align}
    \label{eq:discriminator_objective}
    \lambda^{(i)} &\gets \argmax_{\lambda} \mathbb{E}_{\substack{g \sim p(g), \\s \sim \rho^{\pi_{\theta^{(i-1)}}}(s|g)}} [\log q_\lambda(g|s) - \log p(g)] \\
    \label{eq:policy_objective}
    \theta^{(i)} &\gets \argmax_{\theta} \mathbb{E}_{\substack{g \sim p(g), \\s \sim \rho^{\pi_{\theta}}(s|g)}} [\log q_{\lambda^{(i)}}(g|s)].
\end{align}

As described in the prior work \cite{warde-farley2019unsupervised, choi2021variational}, it has been shown that Equation \ref{eq:policy_objective} which is also called an intrinsic reward \cite{gregor2016variational}, recovers the objective of GCRL in Equation \ref{eq:gcrl_objective} with dense rewards. By choosing a Gaussian distribution with mean $s$ and fixed variance $\sigma^2 I$ for $q_\lambda(g|s)$ where $I$ is the identity matrix, this objective becomes a negative $l_2$ distance between $s$ and $g$. Similarly, one can show that the intrinsic reward represented in Equation \ref{eq:policy_objective} becomes a sparse reward where an agent gets $0$ reward if $l_2$ distance between $s$ and $g$ is within some threshold $\delta_g$ and gets $-1$ otherwise. Other MI-based methods can also be considered a GCRL by modeling $q_\lambda(g|s)$ to follow $\mathcal{N}(\mu(s), \sigma^2 I)$ where $\mu(s)$ is a function approximator usually following an encoder structure.


\begin{figure*}
    \centering
    \includegraphics[width=0.8\linewidth]{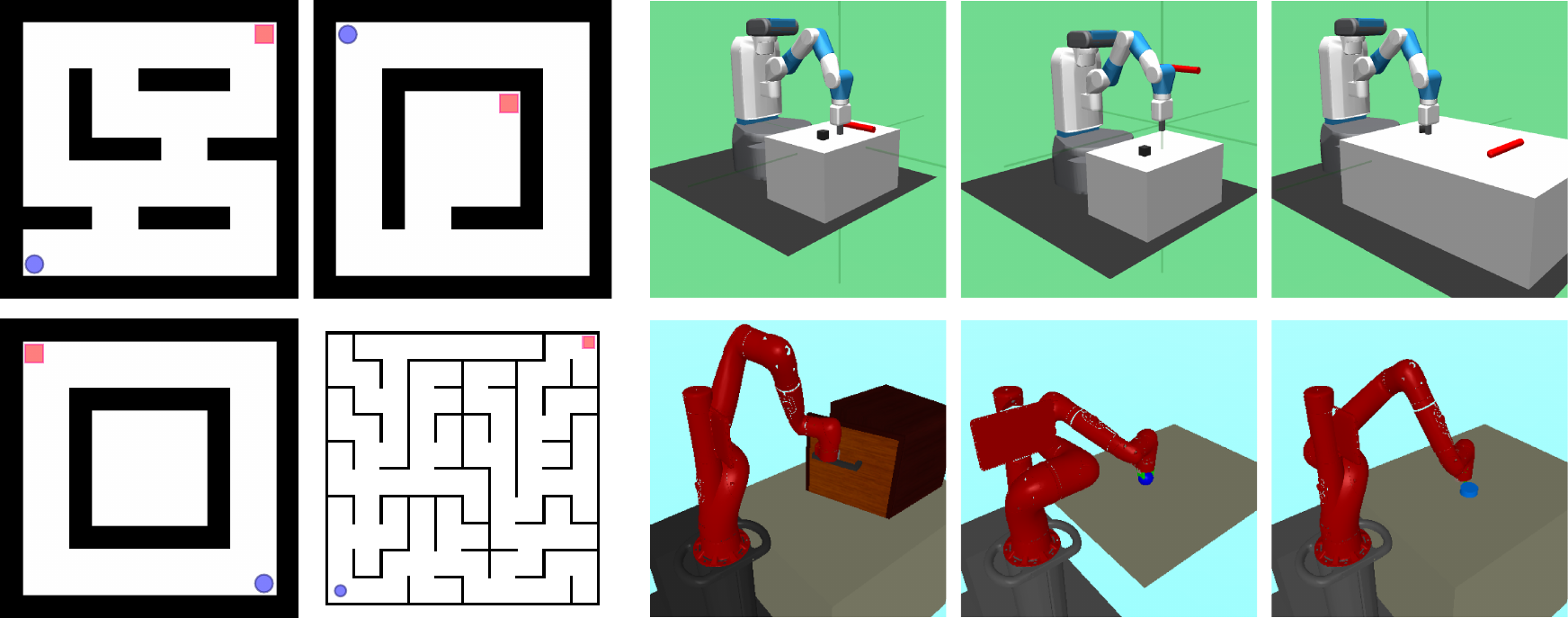}
    \caption{Illustrations of simulated environments. (Left) Point maze navigation tasks which we name \textit{PointMazeA}, \textit{B}, \textit{C}, and \textit{SquareLarge} in sequential order. The initial state and goal distribution of each task are depicted by a blue circle and red box, respectively. (Top right) Configuration-based robot manipulation tasks: \textit{FetchPush},  \textit{FetchPickAndPlace}  and \textit{FetchSlide}. The goal distribution which represents the target position for the puck, is illustrated by a red cylinder. (Bottom right) Vision-based robot manipulation tasks: \textit{SawyerDoorHook}, \textit{SawyerPickup} and \textit{SawyerPush}.}
    \label{fig:environments}
    \vspace{-0.3cm}
\end{figure*}

We further expand the interpretation of MI-based methods as a framework of GCRL to a framework of curriculum learning, which we term VCRL. Curriculum learning in RL studies the order of training skills or tasks. In the context of GCRL, the order of tasks, \textit{curriculum}, is determined by characterizing a goal distribution $p(g)$ \cite{fournier2018accuracy, florensa2018automatic, racaniere2019automated, ren2019exploration, zhang2020automatic, klink2020self}.
Without an explicit design of $p(g)$, VCRL is reduced to a simple GCRL where a target goal is given from the environment, $p^{\mathrm{target}}(g)$. Otherwise, one can design a goal generative model to satisfy various purposes of the training. For instance, EDL \cite{campos2020explore}, a variant of MI-based RL, aims to train a state space covering skill. EDL first learns $p^{\mathrm{explored}}(g)$ along with an exploration policy \cite{lee2019efficient} which tries to cover the entire state space. Then, it optimizes the MI objective (Equation \ref{eq:mi_objective}) with the stationary goal distribution $p^{\mathrm{explored}}(g)$. Skew-Fit \cite{pong2020skew} also seeks to learn a state space covering skill in an unsupervised manner. However, unlike EDL, it assumes a non-stationary goal distribution to ensure that the state density $p(s)$ converges to uniform distribution. This is achieved by formulating the goal distribution, $p(g)$, to be proportional to the approximate state density, $p^{\mathrm{visited}}(s)$, raised to a skewing parameter $\alpha$ within the range of $[-1, 0)$. Similarly, RIG samples goals directly from $p^{\mathrm{visited}}(s)$.



\vspace{-0.1cm}
\section{Value Uncertainty Variational Curriculum}



Despite the many empirical successes of empowerment methods, learning complex skills is still challenging since there has been little consideration of $p(g)$ in the MI objective \cite{achiam2018variational, eysenbach2018diversity, warde-farley2019unsupervised, campos2020explore}.
To efficiently learn complex skills, it is important to effectively optimize the variational empowerment in Equation \ref{eq:mi_objective}.
To this end, the agent should seek out goals from which it can learn the most. This can be formalized in the uncertainty of value functions which track the performance of the policy. To estimate the uncertainty, we use an ensemble of multiple value functions that has been widely adopted in the literature with empirical success \cite{osband2016deep, lakshminarayanan2017simple, osband2018randomized, zhang2020automatic}. Formally, we maintain an ensemble of parameters for value functions: $\mathbb{\psi}=\{\psi_1, ..., \psi_K\}$, which is randomly initialized independently,
\begin{align}
    \text{Value functions   } v_{\psi} : s, g \rightarrow V_{\psi}(s, g). 
\end{align}
%
We quantify the uncertainty of value functions in predictions of the ensemble members from the initial state by computing the variance over the ensemble of the value functions:
%
\begin{align}
    \label{eq:uncertainty}
    \text{Uncertainty   } U(g) : \mathrm{Var}[\{ V_{\psi}(s_0, g) | \psi \in \{\psi_1, ..., \psi_K\}].
\end{align}
%
\begin{restatable}{proposition}{Firstprop}\label{prop_1}
If $V_{\psi}(s_0, g)$ follows a log-concave distribution, then we have
\begin{align}
    \mathcal{I}(V_{\psi}(s_0, g);\psi| s_0, g) \ge \log (2\sqrt{\mathrm{Var}[V_{\psi}(s_0, g)]}).
\end{align}
\end{restatable}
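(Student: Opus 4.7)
The plan is to reduce the conditional mutual information to an unconditional differential entropy and then invoke a standard entropy--variance inequality for log-concave densities. First I would apply the chain rule for mutual information to obtain
\begin{equation*}
\mathcal{I}(V_{\psi}(s_0,g);\psi \mid s_0, g) = h(V_{\psi}(s_0,g)\mid s_0, g) - h(V_{\psi}(s_0,g)\mid \psi, s_0, g).
\end{equation*}
Since $V_{\psi}(s_0, g)$ is a deterministic function of $\psi$ once $(s_0,g)$ is fixed, the second term should be treated as zero: this is literal in the discrete-ensemble formulation, where $H(f(\psi)\mid \psi)=0$, and in the continuous interpretation it is the standard convention adopted in the empowerment literature cited in the paper (or can be recovered as a vanishing-noise limit). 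This collapses the claim to the one-dimensional inequality $h(V_{\psi}(s_0,g)) \geq \log\bigl(2\sqrt{\mathrm{Var}[V_{\psi}(s_0,g)]}\bigr)$.

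Next I would invoke the classical entropy--variance inequality for one-dimensional log-concave distributions: if $X$ has a log-concave density on $\mathbb{R}$, then $h(X) \geq \log(c\sqrt{\mathrm{Var}(X)})$ for a universal constant $c$, with the sharp value $c=e$ attained by the exponential distribution. Since $e > 2$, the stated bound with constant $2$ is an immediate weakening. A self-contained derivation combines the structural bound $\|f_X\|_\infty \sqrt{\mathrm{Var}(X)} \leq 1$ (valid for log-concave $f_X$, with the exponential as the extremizer) with a log-concave refinement of the elementary inequality $h(X) \geq -\log\|f_X\|_\infty$; alternatively, one may simply cite the sharp entropy--variance bound for log-concave measures due to Bobkov and Madiman.

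The main obstacle is the first step: the chain-rule decomposition is only formal when $V_{\psi}(s_0,g)$ is a deterministic function of $\psi$, since the residual conditional differential entropy is strictly $-\infty$ in that regime. The cleanest resolutions are either (i) to work within the finite-ensemble setting and interpret the mutual information as the Shannon entropy of the pushforward distribution of $V_{\psi}(s_0,g)$, or (ii) to regularize by $V_{\psi} + \varepsilon Z$ with $Z \sim \mathcal{N}(0,1)$ and pass $\varepsilon \downarrow 0$, checking that both sides of the inequality are continuous in $\varepsilon$. Once this technicality is discharged, the log-concave entropy bound closes the proof in a single line.
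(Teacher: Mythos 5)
Your proof follows essentially the same route as the paper's: rewrite the conditional mutual information as the marginal entropy of $V_{\psi}(s_0,g)$ minus the entropy given $\psi$, drop the second term as degenerate for a deterministic ensemble, and then apply an entropy--variance lower bound for one-dimensional log-concave random variables; the paper cites Marsiglietti and Kostina, whose theorem yields exactly the constant $2$ in the statement, whereas you reach the same bound by weakening a (reportedly sharp, exponential-extremal) constant $e$ version of the same inequality. Your explicit handling of the fact that the residual conditional differential entropy is $-\infty$ when $V_{\psi}$ is a deterministic function of $\psi$ is actually more careful than the paper, which simply asserts that the mutual information reduces to the marginal entropy in that case.
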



\begin{sproof}
We rewrite the mutual information as the difference between conditional entropy and marginal entropy. We then use the result in \cite{marsiglietti2018lower} on a lower bound on the entropy of a log-concave random variable, expressed in terms of the $p$-th absolute moment to obtain the conclusion. The complete proof appears in Appendix \ref{A_Proofs}.
\end{sproof}


It follows from Proposition \ref{prop_1} that finding a goal which maximizes the mutual information can be relaxed into the surrogate problem, which is to select a goal that maximizes the uncertainty in predictions of an ensemble of value functions when we take $K \to \infty$. With this intuition, one natural option to sample goals is to compute a goal probability proportional to the uncertainty $p(g) \propto U(g)$, where $g \in \mathrm{support}(p_t^{\mathrm{visited}})$. To prevent goals with lower density from being frequently proposed, we adopt the Skew strategy \cite{pong2020skew} which assigns more weight to rare samples by skewing the goal sampling probability. We therefore sample goals from the following distribution:
%
\begin{align}
\label{eq:vuvc_probs}
    p_t^{\mathrm{VUVC}}(g) = \frac{1}{Z_{t, \alpha}}U(g)p_t^{\mathrm{visited}}(g)^{\alpha},  \quad\alpha \in [-1, 0),
\end{align}
where $Z_{t,\alpha}$ is the normalizing coefficient. We approximate $p_t^{\mathrm{visited}}$ by training a generative model on samples in the replay buffer, where we use a $\beta$-VAE \cite{higgins2017betavae} in our experiments. 
We term a VCRL method with a goal generative model following Equation \ref{eq:vuvc_probs} as VUVC.

\begin{figure*}
    \centering
    \includegraphics[width=0.95\linewidth]{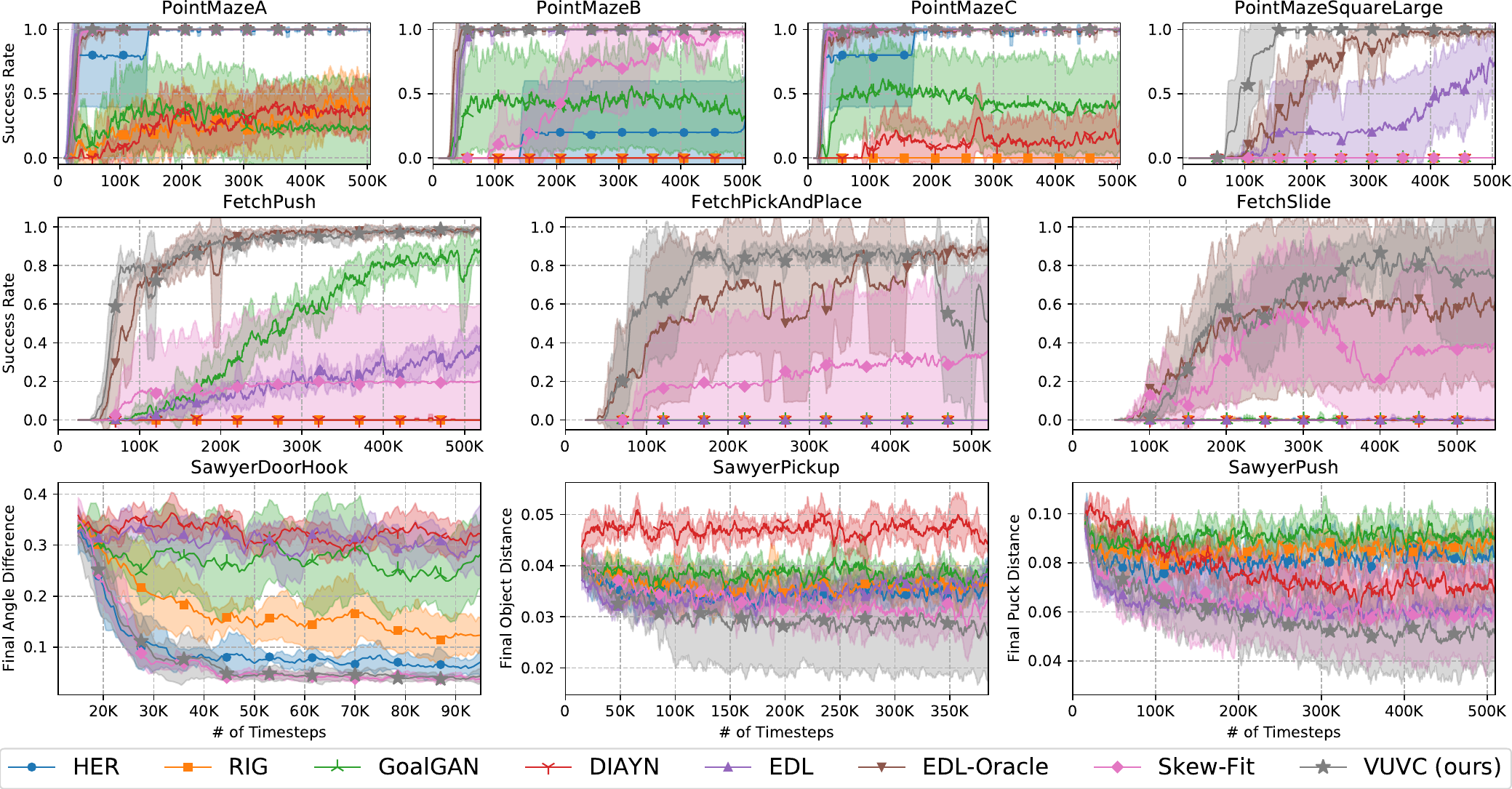}
    \caption{Learning curves for configuration-based point maze navigation tasks (top), continuous robot control tasks (middle), and vision-based continuous robot manipulation tasks (bottom). \textit{Mean (SD)} of each performance measure over 5 random seeds are reported where results are smoothed across 10 training epochs for each seed. VUVC consistently outperforms other VCRL variants for all tasks.}
    \label{fig:quantitative}
\end{figure*}

\begin{definition}(Expected Entropy Increment over Uniform Curriculum).\label{def:def_1}
Given the empirical distribution of the visited state
\begin{equation}
    p_t^{\mathrm{visited}}(s)=\sum_{i=1}^t\frac{\mathbb{I}(s_i=s)}{t},
\end{equation}
where $\mathbb{I}(\cdot)$ is an indicator function,
uniform curriculum goal distribution $p_t^{\mathcal{U}}$ and value uncertainty-based curriculum goal distribution $p_t^{\mathrm{VU}}$ are defined as follows:
\begin{align}
    p_t^{\mathcal{U}}(g) &= \mathcal{U}(\mathrm{support}(p_t^{\mathrm{visited}}))(g), \\
    p_t^{\mathrm{VU}}(g) &= \frac{1}{Z_t}U(g)p_t^{\mathcal{U}}(g),
\end{align}
where $Z_t$ is the normalizing coefficient, $p_t^{\mathcal{U}}$ is uniform over the support of the $p_t^{\mathrm{visited}}$ and $U(g)$ is the value uncertainty. Then the expected entropy increment over uniform curriculum $I_t$ is defined as 
\begin{align}
\label{eq:entropy_increment}
    I_t = \mathbb{E}_{g\sim p_t^{\mathrm{VU}}}[\mathcal{H}(p_{t+1}^{\mathrm{visited}})] - \mathbb{E}_{g\sim p_t^{\mathcal{U}}}[\mathcal{H}(p_{t+1}^{\mathrm{visited}})].
\end{align}
\end{definition}

To study the asymptotic behavior of the expected next step entropy induced by VUVC, we define the expected entropy increment over uniform curriculum in Equation \ref{eq:entropy_increment} for the case of discrete state space. However, computing the empirical distribution of the next visited state $p_{t+1}^{\mathrm{visited}}$ requires marginalizing out the MDP dynamics which is intractable to compute. Therefore, we consider two special cases when (1) an agent always reaches the goal in Proposition \ref{prop_2} and (2) an agent sometimes fails to reach goals but potentially increases the amount of entropy in Proposition \ref{prop_3}.

\begin{restatable}{proposition}{Secondprop}\label{prop_2}
Given $\epsilon=\frac{1}{t}$ and $\rho^{\pi_{\theta}}(s|g)=\mathbb{I}(s=g)$, if 
\begin{align}
    \mathrm{Cov}[U(g),\log p_t^{\mathrm{visited}}(g)] \leq 0,
\end{align}
and take $\epsilon \to 0$, then we have,                                                   
\begin{align}
    &\lim_{\epsilon \to 0}\frac{\partial}{\partial \epsilon}I_t = \nonumber \\
    &\lim_{\epsilon \to 0}\frac{\partial}{\partial \epsilon} \left( \mathbb{E}_{g\sim p_t^{\mathrm{VU}}}[\mathcal{H}(p_{t+1}^{\mathrm{visited}})] - \mathbb{E}_{g\sim p_t^{\mathcal{U}}}[\mathcal{H}(p_{t+1}^{\mathrm{visited}})] \right) > 0.
\end{align}
\end{restatable}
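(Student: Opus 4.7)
\textbf{Proof plan for Proposition \ref{prop_2}.}

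The plan is to perform a first-order Taylor expansion of the visited-state entropy around $\epsilon=0$ (equivalently $t\to\infty$) and reduce the entropy increment $I_t$ to a covariance between the value uncertainty $U(g)$ and $\log p_t^{\mathrm{visited}}(g)$. Under $\rho^{\pi_\theta}(s|g)=\mathbb{I}(s=g)$, the empirical distribution satisfies the exact mixture recursion
\begin{equation*}
p_{t+1}^{\mathrm{visited}}(s) \;=\; (1-\epsilon)\,p_t^{\mathrm{visited}}(s) \,+\, \epsilon\,\mathbb{I}(s=g),
\end{equation*}
with $\epsilon=1/(t+1)$. I would substitute this into the Shannon entropy, differentiate with respect to $\epsilon$, and use $\sum_s[\mathbb{I}(s=g)-p_t^{\mathrm{visited}}(s)]=0$ to obtain the clean identity
\begin{equation*}
\left.\tfrac{\partial}{\partial \epsilon}\mathcal{H}(p_{t+1}^{\mathrm{visited}})\right|_{\epsilon=0} \;=\; -\log p_t^{\mathrm{visited}}(g) \,-\, \mathcal{H}(p_t^{\mathrm{visited}}).
\end{equation*}

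Next I would take expectations of this derivative under the two curriculum distributions $p_t^{\mathrm{VU}}$ and $p_t^{\mathcal{U}}$, noting that the $\mathcal{H}(p_t^{\mathrm{visited}})$ term is a constant in $g$ and therefore cancels in the difference. What remains is
\begin{equation*}
\left.\tfrac{\partial}{\partial\epsilon} I_t\right|_{\epsilon=0} \;=\; \mathbb{E}_{g\sim p_t^{\mathcal{U}}}\!\bigl[-\log p_t^{\mathrm{visited}}(g)\bigr] \,-\, \mathbb{E}_{g\sim p_t^{\mathrm{VU}}}\!\bigl[-\log p_t^{\mathrm{visited}}(g)\bigr].
\end{equation*}
Then I would unfold the definition $p_t^{\mathrm{VU}}(g) = U(g)p_t^{\mathcal{U}}(g)/Z_t$ and rewrite the expectation under $p_t^{\mathrm{VU}}$ as $\mathbb{E}_{p_t^{\mathcal{U}}}[U(g)X(g)]/\mathbb{E}_{p_t^{\mathcal{U}}}[U(g)]$, where $X(g)=\log p_t^{\mathrm{visited}}(g)$. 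A short algebraic manipulation collapses the difference to
\begin{equation*}
\left.\tfrac{\partial}{\partial\epsilon} I_t\right|_{\epsilon=0} \;=\; -\,\frac{\mathrm{Cov}_{p_t^{\mathcal{U}}}\!\bigl[U(g),\,\log p_t^{\mathrm{visited}}(g)\bigr]}{\mathbb{E}_{p_t^{\mathcal{U}}}\!\bigl[U(g)\bigr]}.
\end{equation*}
Since $U(g)=\mathrm{Var}[V_\psi(s_0,g)]\geq 0$ with a positive denominator whenever the value ensemble is non-degenerate, the hypothesized sign of the covariance forces the right-hand side to be nonnegative, and I would argue it is strictly positive by noting that, generically, the ensemble variance is not almost-surely constant in $g$ so the covariance is strict; combined with the hypothesis the inequality is strict.

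The main obstacle I anticipate is twofold. First, I must justify interchanging the $\epsilon$-derivative with the expectation over $g$ (straightforward in the discrete case, since the support of $p_t^{\mathcal{U}}$ is finite and the summands are smooth in $\epsilon$). Second, and more subtly, the statement claims a \emph{strict} inequality under a \emph{non-strict} covariance hypothesis; this cannot follow from the covariance alone and I would need to either (i) add a mild nondegeneracy condition ensuring the denominator is nonzero and the ensemble uncertainty is non-constant on the support of $p_t^{\mathcal{U}}$, or (ii) interpret the claim in a generic sense. I would flag this and, if necessary, state the result as $\left.\partial_\epsilon I_t\right|_{\epsilon=0}\geq 0$ with strictness under the additional nondegeneracy of $U$. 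The rest of the derivation is essentially bookkeeping with the mixture Taylor expansion.
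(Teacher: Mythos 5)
Your plan follows essentially the same route as the paper's proof: the same mixture update for $p_{t+1}^{\mathrm{visited}}$, differentiation of the entropy evaluated at $\epsilon=0$, cancellation of the $g$-independent entropy term, and reduction to $-\mathrm{Cov}\bigl[U(g),\log p_t^{\mathrm{visited}}(g)\bigr]$ (the paper absorbs your positive normalizer $\mathbb{E}_{p_t^{\mathcal{U}}}[U(g)]$ into its covariance, which does not affect the sign). One bookkeeping slip: your displayed intermediate, $\mathbb{E}_{g\sim p_t^{\mathcal{U}}}[-\log p_t^{\mathrm{visited}}(g)]-\mathbb{E}_{g\sim p_t^{\mathrm{VU}}}[-\log p_t^{\mathrm{visited}}(g)]$, has the two terms swapped relative to the definition $I_t=\mathbb{E}_{p_t^{\mathrm{VU}}}[\cdot]-\mathbb{E}_{p_t^{\mathcal{U}}}[\cdot]$; the correct order is what actually produces your (correct) final covariance formula. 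Your concern that strict positivity cannot follow from the non-strict hypothesis $\mathrm{Cov}\le 0$ is well taken --- the paper's own proof derives exactly the same expression and concludes without addressing this, so your proposed fix (a nondegeneracy condition on $U$ over the support, or weakening the conclusion to $\ge 0$) is the honest repair.
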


\begin{sproof}
We begin by deriving a next step empirical distribution of the visited state given a curriculum goal $g$ and a stationary state distribution induced by the policy $\rho^{\pi_{\theta}}(s|g)$, which can be written as $p_{t+1}^{\mathrm{visited}}(s)=\frac{p_{t}^{\mathrm{visited}}(s) + \epsilon\rho^{\pi_{\theta}}(s|g)}{1+\epsilon}$. Plugging this back into Definition \ref{def:def_1}, we analyze asymptotic behavior of the expected entropy increment and obtain the conclusion with the assumption $\rho^{\pi_{\theta}}(s|g)=\mathbb{I}(s=g)$. The complete proof is provided in Appendix \ref{A_Proofs}.
\end{sproof}


With an accurate goal-conditioned policy and the model of dynamics, Proposition \ref{prop_2} gives us intuition that our VUVC is at least better than the uniform curriculum which Skew-Fit aims to converge to, if the uncertainty of the learned value functions $U(g)$ and the log density of $p_t^{\mathrm{visited}}$ are negatively correlated. We expect this negative correlation to happen frequently, since the uncertainty is positive for novel states, but it eventually reduces to zero with a sufficiently large number of samples.

\begin{restatable}{proposition}{Thirdprop}\label{prop_3}

Define the set $\mathcal{G}=\mathcal{G}_{\mathrm{exploit}} \cup \mathcal{G}_{\mathrm{uninfo}} \cup \mathcal{G}_{\mathrm{info}}$ and positive constant $\Delta_1, \Delta_2$ where
\begin{align}
& \rho^{\pi_{\theta}}(s|g) = \begin{cases} \mathbb{I}(s=g) &  \text{for } g \in \mathcal{G}_{\mathrm{exploit}} \\ \rho_{\mathrm{uninfo}}^{{\pi_{\theta}}}(s|g) &  \text{for } g \in  \mathcal{G}_{\mathrm{uninfo}} \\ \rho_{\mathrm{info}}^{{\pi_{\theta}}}(s|g) &  \text{for } g \in  \mathcal{G}_{\mathrm{info}},\end{cases}
\end{align}
for all $g \in \mathcal{G}_{\mathrm{uninfo}}$,
\begin{align*}
& \mathbb{E}_{s\sim\rho_{\mathrm{uninfo}}^{{\pi_{\theta}}}(s|g)}[\log p_t^{\mathrm{visited}}(s)] = \log p_t^{\mathrm{visited}}(g) + \Delta_1,
\end{align*}
and for all $g \in \mathcal{G}_{\mathrm{info}}$, 
\begin{align*}
& \mathbb{E}_{s\sim\rho_{\mathrm{info}}^{{\pi_{\theta}}}(s|g)}[\log p_t^{\mathrm{visited}}(s)] = \log p_t^{\mathrm{visited}}(g) - \Delta_2.
\end{align*}
Given $\epsilon=\frac{1}{t}$, if 
\begin{align*}
    \mathrm{Cov}[U(g),\log p_t^{\mathrm{visited}}(g)] &\leq 0, \\
    \mathbb{E}_{g \in \mathcal{G}_{\mathrm{uninfo}}}[p_t^{\mathrm{VU}}(g)] &\leq \mathbb{E}_{g \in \mathcal{G}_{\mathrm{uninfo}}}[p_t^{\mathcal{U}}(g)], \\
    \mathbb{E}_{g \in \mathcal{G}_{\mathrm{info}}}[p_t^{\mathrm{VU}}(g)] &\geq \mathbb{E}_{g \in \mathcal{G}_{\mathrm{info}}}[p_t^{\mathcal{U}}(g)], 
\end{align*}
and take $\epsilon \to 0$, then we have,
\begin{align*}
    &\lim_{\epsilon \to 0}\frac{\partial}{\partial \epsilon}I_t = \\
    &\lim_{\epsilon \to 0}\frac{\partial}{\partial \epsilon} \left( \mathbb{E}_{g\sim p_t^{\mathrm{VU}}}[\mathcal{H}(p_{t+1}^{\mathrm{visited}})] - \mathbb{E}_{g\sim p_t^{\mathcal{U}}}[\mathcal{H}(p_{t+1}^{\mathrm{visited}})] \right) > 0.
\end{align*}
\end{restatable}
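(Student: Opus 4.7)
The plan is to extend the Proposition~\ref{prop_2} argument from the all-exploit case to the three-set partition by tracking how each region contributes to the leading-order term in $\epsilon$ of the expected next-step entropy.

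First, I reuse the recursion derived in the proof sketch of Proposition~\ref{prop_2}: for any curriculum goal $g$,
\begin{equation*}
p_{t+1}^{\mathrm{visited}}(s) = \frac{p_t^{\mathrm{visited}}(s) + \epsilon\,\rho^{\pi_\theta}(s|g)}{1+\epsilon}.
\end{equation*}
Expanding $\mathcal{H}(p_{t+1}^{\mathrm{visited}}) = -\sum_s p_{t+1}^{\mathrm{visited}}(s)\log p_{t+1}^{\mathrm{visited}}(s)$, differentiating in $\epsilon$, and letting $\epsilon \to 0$, the $g$-dependent part of $\partial_\epsilon \mathcal{H}(p_{t+1}^{\mathrm{visited}})\big|_{\epsilon=0}$ reduces (up to terms independent of $g$) to
\begin{equation*}
f(g) \;:=\; -\,\mathbb{E}_{s\sim\rho^{\pi_\theta}(s|g)}\!\bigl[\log p_t^{\mathrm{visited}}(s)\bigr].
\end{equation*}
Because the $g$-independent terms cancel in the difference of expectations defining $I_t$, the sign of $\lim_{\epsilon\to 0}\partial_\epsilon I_t$ matches the sign of $\mathbb{E}_{g\sim p_t^{\mathrm{VU}}}[f(g)] - \mathbb{E}_{g\sim p_t^{\mathcal{U}}}[f(g)]$.

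Next, I exploit the partition $\mathcal{G}=\mathcal{G}_{\mathrm{exploit}}\cup\mathcal{G}_{\mathrm{uninfo}}\cup\mathcal{G}_{\mathrm{info}}$. The posited identities say $f(g) = -\log p_t^{\mathrm{visited}}(g)$ on $\mathcal{G}_{\mathrm{exploit}}$, $f(g) = -\log p_t^{\mathrm{visited}}(g) - \Delta_1$ on $\mathcal{G}_{\mathrm{uninfo}}$, and $f(g) = -\log p_t^{\mathrm{visited}}(g) + \Delta_2$ on $\mathcal{G}_{\mathrm{info}}$. Splitting the expectations accordingly, the difference decomposes as
\begin{align*}
&\bigl(\mathbb{E}_{p_t^{\mathrm{VU}}}[-\!\log p_t^{\mathrm{visited}}(g)] - \mathbb{E}_{p_t^{\mathcal{U}}}[-\!\log p_t^{\mathrm{visited}}(g)]\bigr) \\
&\quad -\Delta_1\bigl(\mathbb{E}_{g\in\mathcal{G}_{\mathrm{uninfo}}}[p_t^{\mathrm{VU}}(g)] - \mathbb{E}_{g\in\mathcal{G}_{\mathrm{uninfo}}}[p_t^{\mathcal{U}}(g)]\bigr) \\
&\quad +\Delta_2\bigl(\mathbb{E}_{g\in\mathcal{G}_{\mathrm{info}}}[p_t^{\mathrm{VU}}(g)] - \mathbb{E}_{g\in\mathcal{G}_{\mathrm{info}}}[p_t^{\mathcal{U}}(g)]\bigr).
\end{align*}
The first bracket is nonnegative by the same covariance argument as in Proposition~\ref{prop_2}: writing $p_t^{\mathrm{VU}} = Z_t^{-1}U(g)p_t^{\mathcal{U}}(g)$ and using $\mathrm{Cov}[U(g),\log p_t^{\mathrm{visited}}(g)]\le 0$ (with expectation taken under $p_t^{\mathcal{U}}$), reweighting by $U$ decreases the expected value of $\log p_t^{\mathrm{visited}}$. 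The second bracket is nonnegative because the $p_t^{\mathrm{VU}}$-mass on $\mathcal{G}_{\mathrm{uninfo}}$ is at most its $p_t^{\mathcal{U}}$-mass and is multiplied by $-\Delta_1<0$. The third bracket is nonnegative because $\Delta_2>0$ and the $p_t^{\mathrm{VU}}$-mass on $\mathcal{G}_{\mathrm{info}}$ dominates. Assuming at least one of these inequalities is strict (which generically holds whenever the three sets are nontrivial), the sum is strictly positive, yielding the claim.

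The main obstacle I expect is the first step: carefully isolating the $g$-dependent leading-order coefficient in $\partial_\epsilon \mathcal{H}(p_{t+1}^{\mathrm{visited}})\big|_{\epsilon=0}$ and justifying the interchange of $\partial_\epsilon$, $\lim_{\epsilon\to 0}$, and the (discrete) sums over $s$ and $g$. The cleanest route is to fix finite support for $p_t^{\mathrm{visited}}$ and $\rho^{\pi_\theta}(\cdot|g)$ (as in Proposition~\ref{prop_2}), so that all sums are finite and dominated convergence applies trivially; the $g$-independent remainder then cancels exactly in the VU-minus-uniform difference, and the sign analysis above completes the proof. The complete calculation mirrors that of Proposition~\ref{prop_2}, with the only new ingredient being the three-way partitioning of $\mathcal{G}$ and the two extra inequalities on curriculum mass.
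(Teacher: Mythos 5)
Your proof follows essentially the same route as the paper's: derive the recursion $p_{t+1}^{\mathrm{visited}}(s) = \frac{p_t^{\mathrm{visited}}(s)+\epsilon\rho^{\pi_\theta}(s|g)}{1+\epsilon}$, differentiate and take $\epsilon\to 0$, reduce the $g$-dependent part to $-\mathbb{E}_{s\sim\rho^{\pi_\theta}(\cdot|g)}[\log p_t^{\mathrm{visited}}(s)]$, split over the three-way partition to obtain $-\mathrm{Cov}[U(g),\log p_t^{\mathrm{visited}}(g)] - \Delta_1\sum_{\mathcal{G}_{\mathrm{uninfo}}}(p_t^{\mathrm{VU}}-p_t^{\mathcal{U}}) + \Delta_2\sum_{\mathcal{G}_{\mathrm{info}}}(p_t^{\mathrm{VU}}-p_t^{\mathcal{U}})$, and conclude from the sign assumptions. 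Your remark that the stated hypotheses only yield nonnegativity unless at least one inequality is strict is apt --- the paper's own proof silently passes over this same point.
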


\begin{sproof}
The proof proceeds in a similar manner as Proposition \ref{prop_2} except for an assumption $\mathcal{G}=\mathcal{G}_{\mathrm{exploit}} \cup \mathcal{G}_{\mathrm{uninfo}} \cup \mathcal{G}_{\mathrm{info}}$. The complete proof is in Appendix \ref{A_Proofs}.
\end{sproof}


%


Proposition \ref{prop_3} extends Proposition \ref{prop_2} to the case where the goal-conditioned policy is sub-optimal and fails to achieve some of the goals. It implies that we need a curriculum method which can filter out uninformative states when the policy can not consistently achieve certain states, in order to achieve a rapid increment of entropy. Empirical observations indicate that VUVC achieves this effect (further details provided in Section \ref{sec:experiments}).


\section{Experiments}
\label{sec:experiments}
\begin{figure}
    \centering
    \includegraphics[width=\linewidth]{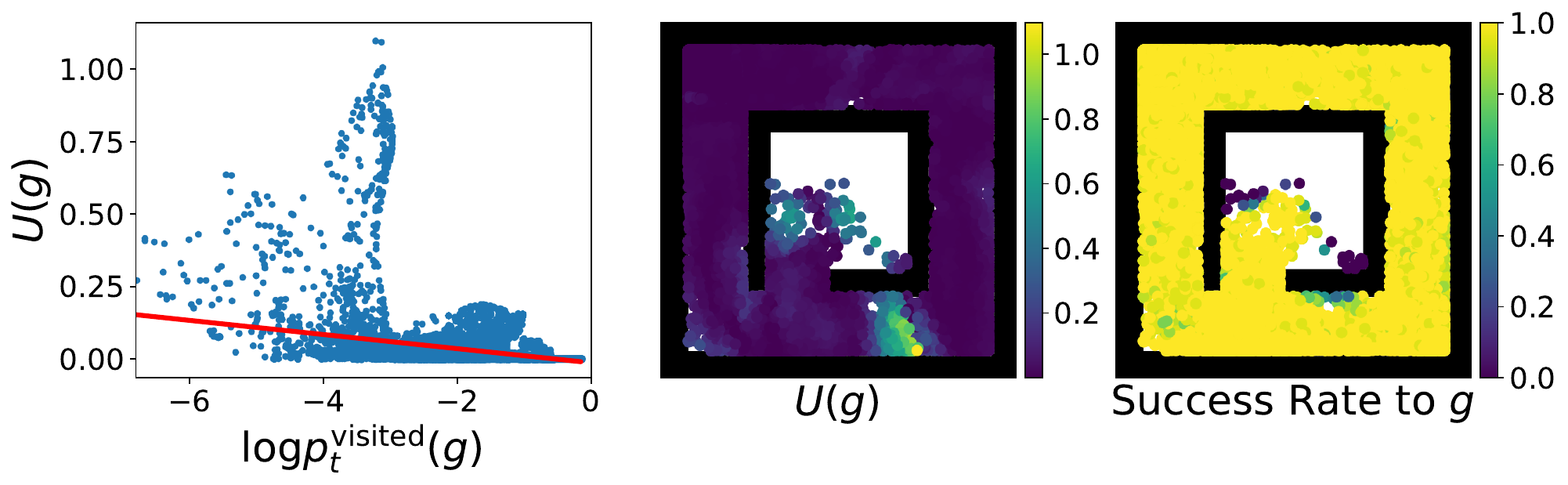}
    \vspace{-0.8cm}
    \caption{An illustration of the relation between value uncertainty and log density of visited states (left) and the landscape of value uncertainty (middle) and success rate (right).}
    \vspace{-0.25cm}
    \label{fig:prop}
\end{figure}


\subsection{Experimental Setup and Baselines}

\begin{figure}[ht]
    \centering
    \includegraphics[width=\linewidth]{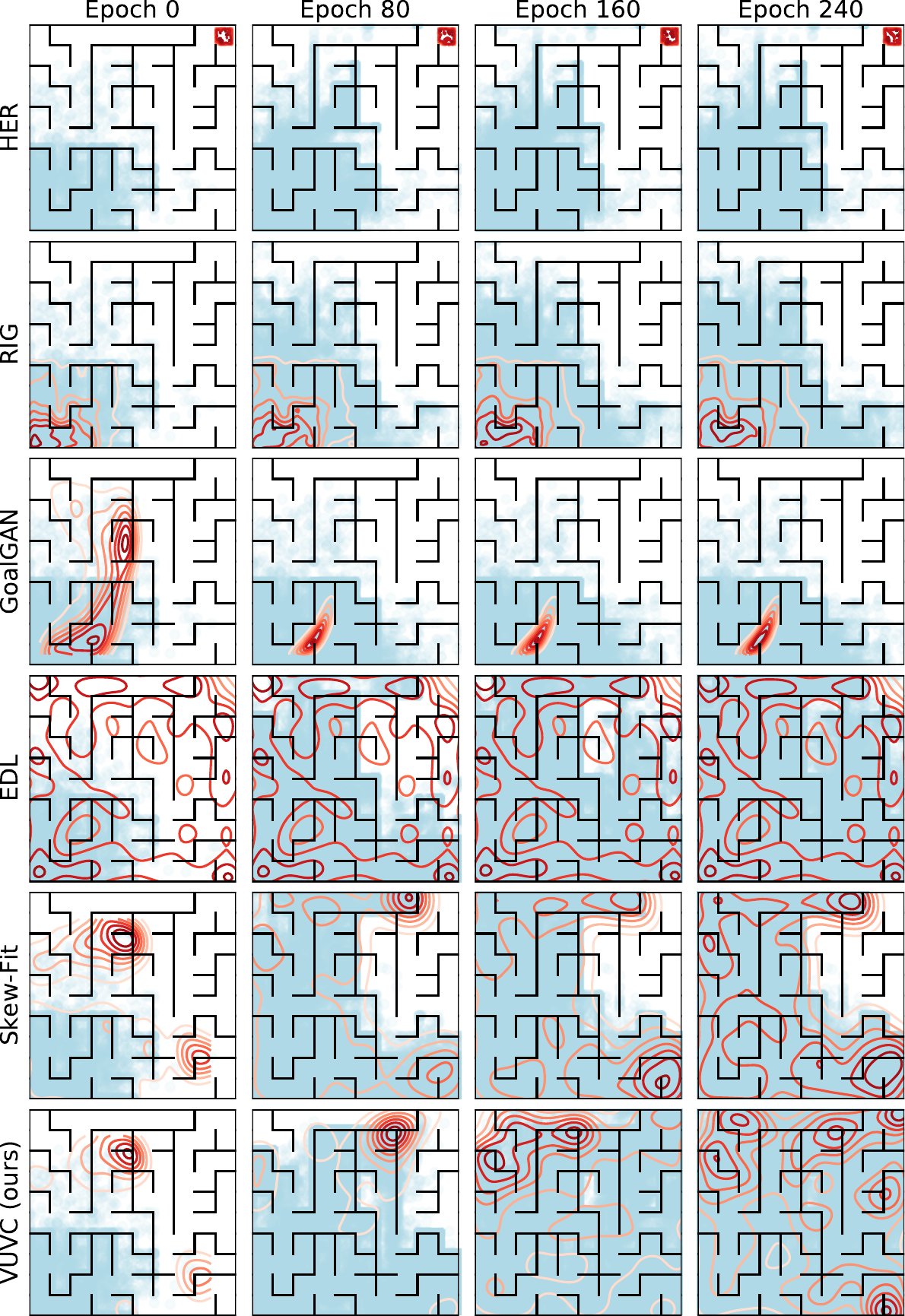}
    \vspace{-0.5cm}
    \caption{Curriculum goal distribution and accumulated visited states. The red contour line illustrates the curriculum goal distributions and cyan dots represent visited states by the agent. VUVC covers the state space significantly faster than the baselines.}
    \label{fig:density}
    \vspace{-0.5cm}
\end{figure}

We validate the effectiveness of VUVC on 10 different environments. They consist of point maze navigation tasks \cite{zhang2020automatic, trott2019keeping}, configuration-based robot control tasks \cite{plappert2018multi}, and vision-based robot manipulation tasks \cite{nair2018visual} which are shown in Figure \ref{fig:environments}. Especially, for configuration-based robot tasks, we modify the initial state and goal distribution following from the prior work \cite{ren2019exploration} to consider more complicated tasks which require extensive exploration.
Further details of experimental setups are presented in Appendix \ref{B_Exp_Details}.
By comparing VUVC to HER \cite{andrychowicz2017hindsight}, we study how effectively explicit curriculum improves sample efficiency over implicit curriculum.
We examine how well value uncertainty curriculum goals encourages exploration 
over goals from GoalGAN \cite{florensa2018automatic} which generates goals by measuring task difficulty through success rate, over goals from DIAYN \cite{eysenbach2018diversity} which divides the visited state space into separate sections for each skill, 
or over goals from RIG \cite{nair2018visual} and Skew-Fit \cite{pong2020skew} which sample goals from the density estimate.
We also investigate the importance of gradually increasing state coverage for the goal distribution by comparing it to EDL \cite{campos2020explore}, and investigate how efficiently VUVC increases the visited state entropy.


\subsection{Comparison of Sample Efficiency}
We compare the number of required samples for task completion in various environments which are based on either configuration observation or image observation. Our experimental results illustrated in Figure \ref{fig:quantitative} show that VUVC outperforms a variety of VCRL variants. Note that although EDL and EDL-Oracle take advantage of an additional training phase, VUVC outperforms them.

\vspace{-0.1cm}
\paragraph{Point Maze Navigation Tasks} 
VUVC successfully accomplishes all tasks, while some baseline methods fail.
Especially in the complicated \textit{PointMazeSquareLarge} environment, VUVC requires much less interaction for task completion. This result suggests the importance of an elaborate curriculum goal distribution in comparison to GoalGAN or Skew-Fit and emphasizes the importance of a gradually increasing state covering goal distribution when compared to EDL and EDL-Oracle.

\vspace{-0.1cm}
\paragraph{Configuration-based Robotic Manipulation Tasks}
In all three tasks, VUVC significantly outperforms all baselines. It is also noteworthy that VUVC performs better than EDL-Oracle, even though our method does not make an excessive assumption (i.e., the need for an oracle uniform goal sampler).
In comparison to Skew-Fit which also generates goals from a non-stationary distribution, the success rate of VUVC increases much faster. This result indicates to us that our method increases the entropy of the visited state distribution more efficiently than Skew-Fit.

\vspace{-0.1cm}
\paragraph{Vision-based Robotic Manipulation Tasks}
VUVC presents the best performance compared to other VCRL variants in image observation environments. We train a policy in a latent space instead of directly training in an image space, as it has been shown that this solves RL problems in an image space efficiently \cite{nair2018visual}, where an encoder of state density estimate model for a goal generator is used for a mapping function from an image observation to a latent observation. Even in a poorly-structured observation space, Figure \ref{fig:quantitative} shows that VUVC consistently outperforms a variety of baseline methods. Note that DIAYN struggles in the \textit{SawyerDoorHook} and \textit{SawyerPickup} tasks as its policy remain close to the initial state during the training phase.




\subsection{Impact of the Value Uncertainty}
\label{sec:impact_of_uncertainty}
To see the effects of the value uncertainty in the curriculum, in Figure \ref{fig:prop}, we investigate (1) how the value uncertainty $U(g)$ and log density of visited states $p_t^{\mathrm{visited}}$ are correlated, and (2) how well the value uncertainty filters out uninformative states. In general, we observe that $U(g)$ and $p_t^{\mathrm{visited}}$ show negative correlation, indicating that VUVC covers the state space faster than the uniform curriculum for the optimal goal-conditioned policy as the regularity condition of Proposition \ref{prop_2} holds empirically.
In addition to this, we observe a case which satisfies the regularity condition of Proposition \ref{prop_3} from the landscape visualization, implying that our method is more effective than the uniform curriculum.
Uncertainty is low for easily reachable goals (yellow in the success rate landscape) as well as barely reachable goals (purple). On the other hand, uncertainty of goals that are moderately reachable (green) is high, which indicates that the value uncertainty focuses more on informative goals and results in better performance as we see in Figure \ref{fig:quantitative}.


\subsection{Extensive Exploration for State Coverage}
We next evaluate the effectiveness of our method by qualitatively comparing the speed of state coverage of each method in the \textit{PointMazeSquareLarge} environment. Figure \ref{fig:density} demonstrates that VUVC efficiently increases the visited state entropy by considering the value uncertainty. Furthermore, after a sufficient number of exploration steps, the curriculum goal distribution induced by VUVC approaches a uniform distribution as the value uncertainty for every state converges to a consistent value. The results for other tasks are presented in Appendix \ref{D_3_Add_Res}.
\begin{figure}
    \centering
    \includegraphics[width=0.8\linewidth]{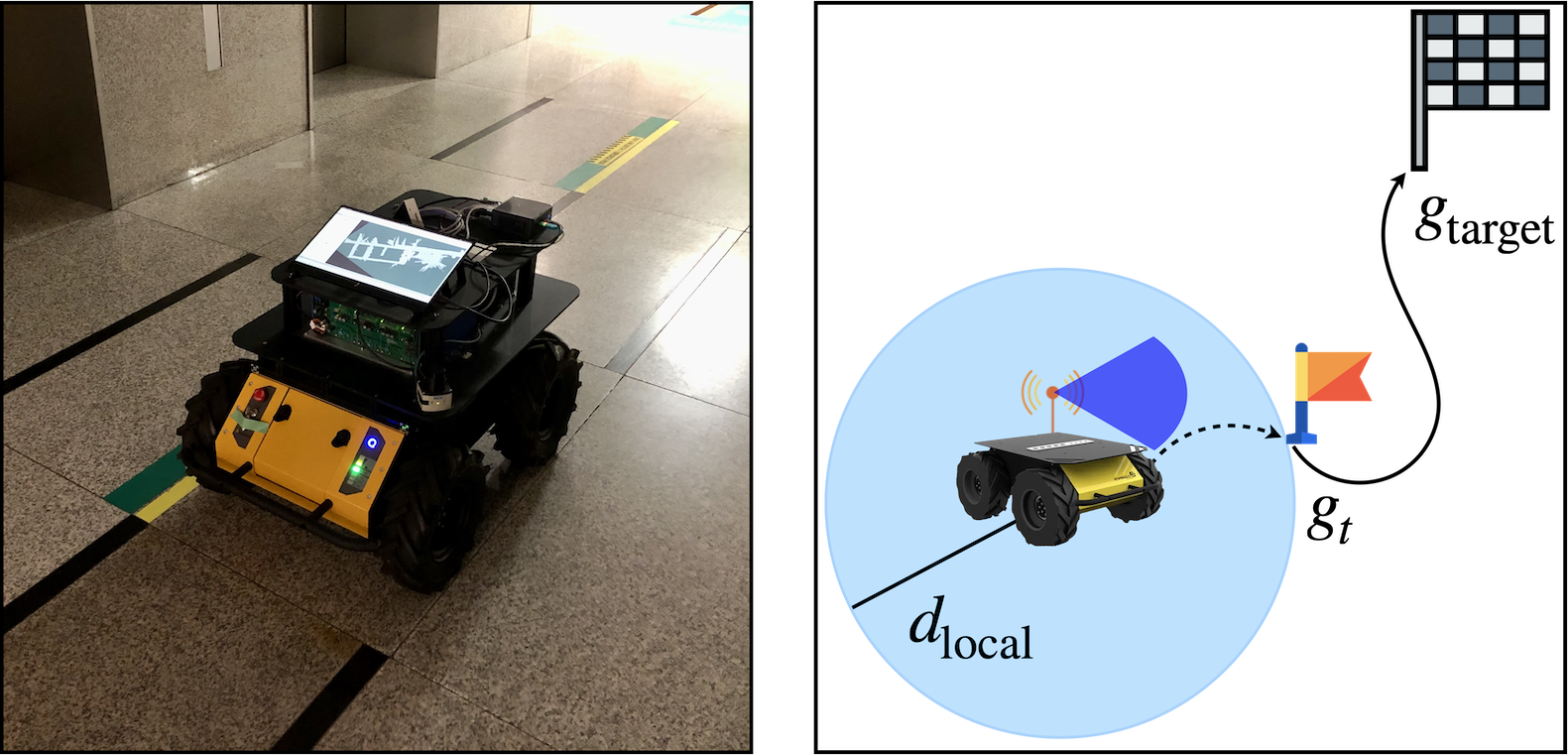}
    \caption{An illustrative example of how we utilize a global planner to generate a subgoal for our real robot platform.}
    \label{fig:real_exp_setup}
\end{figure}
\begin{figure}
    \centering
    \includegraphics[width=0.95\linewidth]{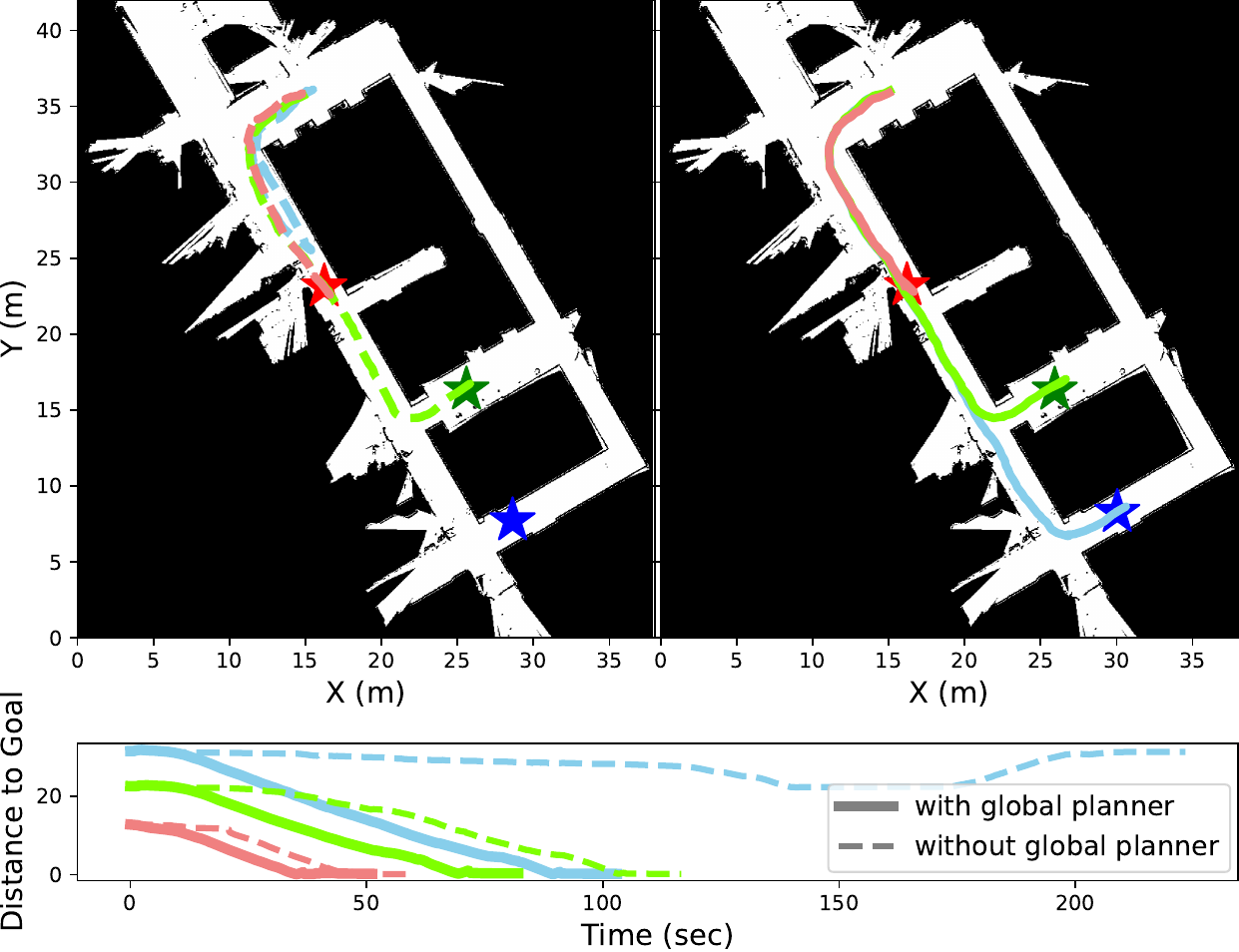}
    \caption{Building-scale navigation task with a real-world robot without (top left) and with global planner (top right). (Bottom) Evaluation on reaching the target goal.}
    \label{fig:real_exp_result}
    \vspace{-0.3cm}
\end{figure}




\subsection{Deploying Skills on the Real-world Robot}






We evaluate our method in a building-scale navigation task on the Husky A200 mobile robot which detects obstacles using a LiDAR sensor. We first apply our algorithm in a 2D navigation environment, and deploy learned navigation skills directly on the real robot in a zero-shot setup.
Figures \ref{fig:real_exp_setup} and \ref{fig:real_exp_result} show that the learned navigation skill can be directly used on our real-world robot without a manual design of complex reward functions and curriculum. We further demonstrate that combining learned navigation skills with the help of a global planner improves navigation performance. The learned skill aims to reach the local goal $g_t$ that is $d_{\mathrm{local}}$ away from the robot on the trajectory generated by the global planner. Figure \ref{fig:real_exp_result} demonstrates that the learned skill combined with the global planner reaches the goal faster (solid line) than the learned skill itself (dashed line).
Detailed description of the real-world experiment setup can be found in Appendix \ref{sec:real_exp_setup}.



%


\section{Related Work}

\subsection{Curriculum RL}

In GCRL, a goal relabeling scheme which samples goals from failed trajectories is proposed as an implicit curriculum method \cite{andrychowicz2017hindsight, fang2018dher, liu2018competitive, ding2019goal, fang2019curriculum, nair2018visual}. Another line of work investigates curriculum generation methods that consider task difficulty. These methods explicitly model a curriculum generative model, generating goals based on task difficulty \cite{florensa2018automatic, racaniere2019automated}, competence progress \cite{fournier2018accuracy}, utilization of an additional agent \cite{narvekar2019learning}, maximization of achieved goal distribution entropy with heuristic \cite{pitis2020maximum}, or progressive updating towards a predefined target distribution \cite{klink2020self}. However, prior works do not provide theoretical justification \cite{florensa2018automatic, racaniere2019automated}, are limited to a given target distribution \cite{fournier2018accuracy, narvekar2019learning, klink2020self}, or depend on manually engineered heuristics \cite{pitis2020maximum}. The notion of uncertainty has been also considered in VDS \cite{zhang2020automatic} which measures the uncertainty of the Q-functions to sample curriculum goals.
However, this work lacks theoretical justification and assumes an oracle goal sampler accessing a uniform distribution over all valid states in a state space, which artificially ignores exploration problems by resetting the agent to any state in the environment, whereas our work does not require such an assumption.





\subsection{Empowerment and Unsupervised Skill Learning}
Recent studies on empowerment have studied the forms of mutual information-based objectives to learn state-covering skills \cite{campos2020explore, pong2020skew}, promote skill diversity \cite{achiam2018variational, eysenbach2018diversity, liu2022learn}, learn non-parametric reward functions \cite{warde-farley2019unsupervised}, establish meta-training task distributions \cite{jabri2019unsupervised}, incorporate skill-transition dynamics models along with skill-conditioned policies for a model-based planning \cite{sharma2019dynamics}, and enhance generalization through the successor feature framework \cite{hansen2020fast, liu2021aps}. In addition, a number of works have studied how to extend empowerment to high-dimensional image space by using a non-parametric nearest neighbor to estimate entropy \cite{liu2021behavior, yarats2021reinforcement, seo2021state}. However, most of this research assumes a fixed stationary distribution over skills (or goals) and there has been little exploration regarding the form of skill (or goal) distribution $p(z)$ (or $p(g)$). Compared to prior empowerment approaches, we investigate the effectiveness of curriculum skill distribution.

\subsection{Uncertainty Quantification in RL}

Measures of uncertainty have played a key role in RL. Bootstrapped DQN \cite{osband2016deep} uses a bootstrapping method to estimate the uncertainty of the Q-value, and utilizes it for efficient exploration. Plan2Explore \cite{sekar2020planning} leverages an ensemble of one-step predictive models to guide the exploration. Both bootstrapping and dropout methods are used to measure the uncertainty of the collision prediction model for safe navigation \cite{kahn2017uncertainty}. PBP-RNN \cite{benatan2019fully} uses probabilistic backpropagation as an alternative to quantify uncertainty within a safe RL scenario. PETS \cite{chua2018deep} employs trajectory sampling with probabilistic dynamics models to bridge gap model-based RL and model-free RL.


\subsection{Intrinsic Reward and Exploration}
In a tabular setting, visit counts can be used as exploration bonus to encourage exploration \cite{strehl2008analysis}. Count-based exploration methods are further extended to non-tabular setting by introducing the pseudo-count \cite{bellemare2016unifying, ostrovski2017count} or successor representation \cite{machado2020count}. Another common approach guides the agent based on prediction errors. For instance, squared prediction error in learned dynamics models is used as exploration bonus \cite{stadie2015incentivizing}. RND \cite{burda2018exploration} uses errors in a randomly generated prediction problem that predicts the output of a fixed randomly initialized neural network given the observations.
Our work enables agents to reach any previously visited states by learning goal-conditioned policies that cover the entire goal space. In contrast, exploration bonuses help agents visit novel states, but they cannot reuse learned policies to solve user-specified goals as those states are quickly forgotten.





\section{Conclusion}

We provide the unifying framework VCRL which recasts MI-based RL as curriculum learning in goal-conditioned RL. Under VCRL framework, we propose a novel approach VUVC for unsupervised discovery of skills which utilizes a value uncertainty for an increment in the entropy of the visited state distribution. Under regularity conditions, we prove that VUVC improves the expected entropy more than the uniform curriculum method. Our experimental results demonstrate that VUVC consistently outperforms a variety of prior methods both on configuration-based and vision-based continuous robot manipulation tasks.
We also demonstrate that VUVC enables a real-world robot to learn to navigate in a long-range environment without any explicit rewards, and that incorporating skills with a global planner further improves the performance.

\section*{Acknowledgements}

This work was supported by the Industry Core Technology Development Project, 20005062, Development of Artificial Intelligence Robot Autonomous Navigation Technology for Agile Movement in Crowded Space, funded by the Ministry of Trade, Industry \& Energy (MOTIE, Republic of Korea) and by Institute of Information \& communications Technology Planning \& Evaluation (IITP) grant funded by the Korea government (MSIT) (No. 2022-0-00984, Development of Artificial Intelligence Technology for Personalized Plug-and-Play Explanation and Verification of Explanation, No.2019-0-00075, Artificial Intelligence Graduate School Program (KAIST)).




\bibliography{references}
\bibliographystyle{icml2023}

\newpage
\appendix
\onecolumn



\section{Limitations}\label{sec:limitations}

We summarize the limitations of our work as follows:
\begin{itemize}
    \item Although VUVC demonstrates significant improvements in both sample efficiency and the ability to cover the state space, the quantitative experimental results suggest that divergence during training is a potential problem, particularly in tasks such as \textit{FetchPickAndPlace}, which has a higher dimensionality of state space compared to others. It is likely that divergence occurs in this task when there are states in a confined space that exhibit high value uncertainty, causing VUVC to focus on sampling goals around these states for a certain period of time.
    \item Demonstrating that the regularity conditions of Proposition \ref{prop_2} and \ref{prop_3} hold is limited in empirical study (see Figure \ref{fig:prop}). Therefore, it is an appealing research direction to rigorously show the regularity conditions hold.
    \item In our experiment, we consider a fixed initial state. Even though the core concept of our approach, which estimates the uncertainty of the learned value functions, remains applicable to variable initial states, its performance might be affected negatively due to the increased training data required to handle a wide range of initial states. We have not yet validated the scalability of our approach in environments with non-fixed initial states, and leave it as future work.
\end{itemize}

\section{Relationship with Active Inference}\label{sec:additional_related_work}

Our work is also related to active inference \cite{friston2016active, friston2021world, parr2022active}. Active inference can play a crucial role in the context of world models, as it allows the agent to update its beliefs based on the actions performed by changing the gathered observations. For example, to efficiently learn a world model, Plan2Explore \cite{sekar2020planning} and LEXA \cite{mendonca2021discovering} agents seeks out surprising states by leveraging ensembles of world models to guide their exploration. This can be related to our approach, VUVC, which seek out goals that the agent learns the most from. Moreover, in our approach, we focus on maximizing state-marginal mutual information $I(s;z)$, but if we maximize state-predictive mutual information $I(s'; z|s)$, as in the DADS method \cite{sharma2019dynamics}, we would learn skill-transition dynamics models, which might be considered as world models. From an active inference perspective, this could lead the agent to select actions and collect observations in a manner that reduces the uncertainty associated with skill-transition dynamics. 

\section{Proofs}\label{A_Proofs}

\Firstprop*
\begin{proof}
The mutual information can be rewritten as the difference between conditional entropy and marginal entropy, which correspond to, respectively, the aleatoric uncertainty and predictive entropy:
\begin{align}
    \mathcal{I}(V_{\psi}(s_0, g);\psi| s_0, g) = \mathcal{H}(V_{\psi}(s_0, g)|s_0, g)-\mathcal{H}(V_{\psi}(s_0, g)|\psi,s_0,g).
\end{align}
When the value function is deterministic with zero variance, maximizing the mutual information is equal to maximizing the marginal entropy. As shown in \cite{marsiglietti2018lower}, a lower bound on the entropy of a log-concave random variable can be derived in terms of the $p$-th absolute moment: 
\begin{align}
    \mathcal{H}(V_{\psi}(s_0, g)|s_0,g) \ge \log \left( \frac{2\|V_{\psi}(s_0, g)-\mathbb{E}[V_{\psi}(s_0, g)]\|_p}{\Gamma(p+1)^{\frac{1}{p}}} \right),
\end{align}
where $\Gamma$ denotes the Gamma function. Moreover, for $p=2$, the bound tightens as   
\begin{align}
    \mathcal{H}(V_{\psi}(s_0, g)|s_0,g) \ge \log (2\sqrt{\text{Var}[V_{\psi}(s_0, g)]}),
\end{align}
which implies selecting a skill/goal that maximizes the disagreement in predictions of an ensemble of value functions is equivalent to maximizing the lower bound approximation of the mutual information.
\end{proof}

\Secondprop*
\begin{proof}
Given a visited state $s'$ at time $t+1$, the next step empirical distribution of the visited state can be written as  
\begin{align}
    p_{t+1}^{\mathrm{visited}}(s|s') = \frac{p_{t}^{\mathrm{visited}}(s) + \epsilon\mathbb{I}[s=s']}{1+\epsilon}.
\end{align}
With a curriculum goal $g$ and a stationary state distribution induced by the policy $\rho^{\pi_{\theta}}(s|g)$, a next step empirical distribution of the visited state can be written as 
\begin{align*}
    p_{t+1}^{\mathrm{visited}}(s) &= \sum_{s'} p_{t+1}^{\mathrm{visited}}(s|s') \rho^{\pi_{\theta}}(s'|g) \\
    &= \sum_{s'}\rho^{\pi_{\theta}}(s'|g)\left(  \frac{p_{t}^{\mathrm{visited}}(s) + \epsilon\mathbb{I}[s=s']}{1+\epsilon} \right) \\
    &= \frac{p_{t}^{\mathrm{visited}}(s) + \epsilon\rho^{\pi_{\theta}}(s|g)}{1+\epsilon}. \numberthis
\end{align*}
Substituting the expression of $p_{t+1}^{\mathrm{visited}}(s)$ into entropy increment over uniform curriculum gives
\begin{align*}
    I_t &= \mathbb{E}_{g\sim p_t^{\mathrm{VU}}}[\mathcal{H}(p_{t+1}^{\mathrm{visited}})] - \mathbb{E}_{g\sim p_t^{\mathcal{U}}}[\mathcal{H}(p_{t+1}^{\mathrm{visited}})] \\
    &= \sum_g (p_t^{\mathrm{VU}}(g) - p_t^{\mathcal{U}}(g))\sum_{s}-\frac{p_{t}^{\mathrm{visited}}(s) + \epsilon\rho^{\pi_{\theta}}(s|g)}{1 + \epsilon} \log \frac{p_{t}^{\mathrm{visited}}(s) + \epsilon\rho^{\pi_{\theta}}(s|g)}{1 + \epsilon}. \numberthis
\end{align*}
We take the derivative with respect to $\epsilon$ and consider the asymptotic behavior where $\epsilon \to 0$:
\begin{align*}
&\lim_{\epsilon \to 0}\frac{\partial}{\partial \epsilon}I_t \\
&\quad= \sum_g (p_t^{\mathrm{VU}}(g) - p_t^{\mathcal{U}}(g))\sum_{s}\lim_{\epsilon \to 0}\frac{\partial}{\partial \epsilon} -\frac{p_{t}^{\mathrm{visited}}(s) + \epsilon\rho^{\pi_{\theta}}(s|g)}{1 + \epsilon} \log \frac{p_{t}^{\mathrm{visited}}(s) + \epsilon\rho^{\pi_{\theta}}(s|g)}{1 + \epsilon} \\
&\quad= \sum_g (p_t^{\mathrm{VU}}(g) - p_t^{\mathcal{U}}(g))\sum_{s}\lim_{\epsilon \to 0}-\frac{1}{1 + \epsilon^2}\left( (\rho^{\pi_{\theta}}(s|g) - p_{t}^{\mathrm{visited}}(s))(\log\frac{p_{t}^{\mathrm{visited}}(s) + \epsilon\rho^{\pi_{\theta}}(s|g)}{1 + \epsilon} + 1)\right) \\ 
&\quad= \sum_g (p_t^{\mathrm{VU}}(g) - p_t^{\mathcal{U}}(g))\sum_{s}-(\rho^{\pi_{\theta}}(s|g) - p_{t}^{\mathrm{visited}}(s))(\log p_{t}^{\mathrm{visited}}(s) + 1) \\
&\quad= \sum_g (p_t^{\mathrm{VU}}(g) - p_t^{\mathcal{U}}(g))\sum_{s}\left( -\rho^{\pi_{\theta}}(s|g)\log p_{t}^{\mathrm{visited}}(s) + p_{t}^{\mathrm{visited}}(s)\log p_{t}^{\mathrm{visited}}(s) \right).
\end{align*}
Substituting $\rho^{\pi_{\theta}}(s|g)=\mathbb{I}(s=g)$ simplifies
\begin{align*}
\lim_{\epsilon \to 0}\frac{\partial}{\partial \epsilon}I_t &= \sum_g (p_t^{\mathrm{VU}}(g) - p_t^{\mathcal{U}}(g))\sum_{s} \left(-\mathbb{I}[s=g]\log p_{t}^{\mathrm{visited}}(s) + p_{t}^{\mathrm{visited}}(s)\log p_{t}^{\mathrm{visited}}(s) \right)\\
&= \sum_g (p_t^{\mathrm{VU}}(g) - p_t^{\mathcal{U}}(g))(-\log p_{t}^{\mathrm{visited}}(g) - \mathbb{E}[{-\log p_{t}^{\mathrm{visited}}(g)}]) \\
&= -\sum_g (p_t^{\mathrm{VU}}(g) - \mathbb{E}[{p_t^{\mathrm{VU}}(g)}])(\log p_{t}^{\mathrm{visited}}(g) - \mathbb{E}[{\log p_{t}^{\mathrm{visited}}(g)}]) \\
&\quad+\sum_g (p_t^{\mathcal{U}}(g) - \mathbb{E}[{p_t^{\mathcal{U}}(g)}])(\log p_{t}^{\mathrm{visited}}(g) - \mathbb{E}[{\log p_{t}^{\mathrm{visited}}(g)}]) \\
&= -\mathrm{Cov}[U(g),\log p_t^{\mathrm{visited}}(g)],
\end{align*}
where we use the fact that $p_t^{\mathrm{VU}}(g)=\frac{1}{Z_t}p_t^{\mathcal{U}}(g)U(g)$ and $p_t^{\mathcal{U}}(g)=\mathbb{E}[{p_t^{\mathcal{U}}(g)}]$ for all $g$. Thus we can complete the proof.
\end{proof}
\Thirdprop*

\begin{proof}
We substitute the $\mathcal{G}=\mathcal{G}_{\mathrm{exploit}} \cup \mathcal{G}_{\mathrm{uninfo}} \cup \mathcal{G}_{\mathrm{info}}$ into the entropy increment over uniform curriculum and expand the expression:
\begin{align*}
&\lim_{\epsilon \to 0}\frac{\partial}{\partial \epsilon}I_t \\
&\quad= \sum_{g \in \mathcal{G}_{\mathrm{exploit}}} (p_t^{\mathrm{VU}}(g) - p_t^{\mathcal{U}}(g))\sum_{s} \left(-\mathbb{I}[s=g]\log p_{t}^{\mathrm{visited}}(s) + p_{t}^{\mathrm{visited}}(s)\log p_{t}^{\mathrm{visited}}(s) \right) \\
&\quad\quad+ \sum_{g \in \mathcal{G}_{\mathrm{uninfo}}} (p_t^{\mathrm{VU}}(g) - p_t^{\mathcal{U}}(g))\sum_{s}\left( -\rho_{\mathrm{uninfo}}^{\pi_{\theta}}(s|g)\log p_{t}^{\mathrm{visited}}(s) + p_{t}^{\mathrm{visited}}(s)\log p_{t}^{\mathrm{visited}}(s) \right) \\
&\quad\quad+ \sum_{g \in \mathcal{G}_{\mathrm{info}}} (p_t^{\mathrm{VU}}(g) - p_t^{\mathcal{U}}(g))\sum_{s}\left( -\rho_{\mathrm{info}}^{\pi_{\theta}}(s|g)\log p_{t}^{\mathrm{visited}}(s) + p_{t}^{\mathrm{visited}}(s)\log p_{t}^{\mathrm{visited}}(s) \right) \\
&\quad= \sum_{g \in \mathcal{G}} (p_t^{\mathrm{VU}}(g) - p_t^{\mathcal{U}}(g))\sum_{s} \left(-\mathbb{I}[s=g]\log p_{t}^{\mathrm{visited}}(s) + p_{t}^{\mathrm{visited}}(s)\log p_{t}^{\mathrm{visited}}(s) \right) \\
&\quad\quad+ \sum_{g \in \mathcal{G}_{\mathrm{uninfo}}} (p_t^{\mathrm{VU}}(g) - p_t^{\mathcal{U}}(g)) \left( \log p_{t}^{\mathrm{visited}}(g) - \sum_{s} \rho^{\pi_{\theta}}(s|g)\log p_{t}^{\mathrm{visited}}(s) \right) \\
&\quad\quad+ \sum_{g \in \mathcal{G}_{\mathrm{info}}} (p_t^{\mathrm{VU}}(g) - p_t^{\mathcal{U}}(g)) \left( \log p_{t}^{\mathrm{visited}}(g) - \sum_{s} \rho^{\pi_{\theta}}(s|g)\log p_{t}^{\mathrm{visited}}(s) \right) \\
&\quad= -\mathrm{Cov}[U(g), \log p_t^{\mathrm{visited}}(g)] - \Delta_1 \cdot \sum_{g \in \mathcal{G}_{\mathrm{uninfo}}} (p_t^{\mathrm{VU}}(g) - p_t^{\mathcal{U}}(g)) + \Delta_2 \cdot \sum_{g \in \mathcal{G}_{\mathrm{info}}} (p_t^{\mathrm{VU}}(g) - p_t^{\mathcal{U}}(g))
\end{align*}
Following the assumptions, we can conclude the proof.
\end{proof}



\section{Experimental Setup Details}
\label{B_Exp_Details}

\subsection{Environments}
As we described, we adopt the point mazes from the prior works \cite{zhang2020automatic, trott2019keeping}. Following these works, an agent observes a position of a point and takes action given a 2-dimensional goal position. For configuration-based robotic manipulation tasks, we utilize \textit{Fetch} environments \cite{plappert2018multi} whose initial and goal distribution are modified to consider more complicated tasks, following the prior work \cite{ren2019exploration}. In these environments, an observation includes gripper position and velocity, gripper state, and object position and velocity. Given a 3-dimensional desired object position as a goal, it takes action to move its end-effector in Cartesian coordinates and open/close the gripper. For vision-based manipulation tasks, we adopt \textit{Sawyer} environments \cite{nair2018visual} which manipulate a 7-DoF Sawyer robotic arm solely from a visual input without any explicit positional information of either a robotic arm or an object. A task to solve is given as a desired goal image which the agent should match its observation image with. \textit{HuskyNavigate} is the environment in which we train navigation skills that we deployed on the real robot. An observation consists of the raw 2D laser measurements, the relative goal position, and current robot velocity. Given a 2-dimensional goal position, an action command which consists of linear velocity and angular velocity is given to the robot.
Details about the environments are summarized in Table \ref{tab:env_details}.

\begin{table*}[h]
    \centering
    \begin{tabular}{c|c|c|c|c|c|c}
        \toprule
        \textbf{Parameter} & \textbf{all \textit{PointMaze}} & \textbf{all \textit{Fetch}} & \textbf{\textit{SawyerDoorHook}} & \textbf{\textit{SawyerPickup}} & \textbf{\textit{SawyerPush}}  & \textbf{\textit{HuskyNavigate}} \\
        \midrule
        State space $\mathcal{S}$ & $\in \mathbb{R}^{2}$ & $\in \mathbb{R}^{25}$ & $\in \mathbb{R}^{48\times48\times3}$ & $\in \mathbb{R}^{48\times48\times3}$ & $\in \mathbb{R}^{48\times48\times3}$ & $\in \mathbb{R}^{364}$ \\
        Action space $\mathcal{A}$ & $\in \mathbb{R}^{2}$ & $\in \mathbb{R}^{4}$ & $\in \mathbb{R}^{3}$ & $\in \mathbb{R}^{3}$ & $\in \mathbb{R}^{2}$ & $\in \mathbb{R}^{2}$ \\
        Goal space $\mathcal{G}$ & $\in \mathbb{R}^{2}$ & $\in \mathbb{R}^{3}$ & $\in \mathbb{R}^{48\times48\times3}$ & $\in \mathbb{R}^{48\times48\times3}$ & $\in \mathbb{R}^{48\times48\times3}$ & $\in \mathbb{R}^{2}$ \\
        Episode length & 50 & 50 & 100 & 50 & 50 & 1000 \\
        \bottomrule
    \end{tabular}
    \hspace{1cm}
    \caption{Environment details for each experiment.}
    \label{tab:env_details}
\end{table*}

\subsection{Baseline Algorithms}
We evaluate sample efficiency and state coverage speed of VUVC compared to the following baseline methods:
\begin{itemize}
\item{\textbf{Hindsight Experience Replay (HER)} \cite{andrychowicz2017hindsight}}: 
HER is a na\"ive goal-conditioned RL method. The key idea of HER is to construct implicit curriculum goals by revisiting previous states in the experience replay. By storing additional trajectories using these curriculum goals, HER generates reward signals, even in situations where the initial sparse reward fails to provide meaningful feedback.
%
\item{\textbf{Reinforcement learning with Imagined Goals (RIG)} \cite{nair2018visual}}: 
RIG trains a goal-conditioned policy in an unsupervised manner by estimating the visited state distribution and automatically setting curriculum goals sampled from this distribution.

\item{\textbf{GoalGAN}} \cite{florensa2018automatic}:
GoalGAN encourages an agent to explore by suggesting curriculum goals from the generative model \cite{mao2017least}. To encourage an agent to explore the environment, it generates goals of intermediate difficulty where the difficulty of task (or goal) is measured from a success rate over some number of trials to solve the task. In vision-based robotic manipulation tasks, we compute the success rate in the latent space where an encoder is inherited from RIG which adopts a VAE as a state density estimate model.

\item{\textbf{Diversity Is All You Need (DIAYN)} \cite{eysenbach2018diversity}}: DIAYN learns a latent skill based on mutual information maximization between skills and visited states with policy entropy regularization. It also reduces the mutual information between actions and skills, given the state, in order to separate the skills from each other, and partitions the visited state space into separate sections for each skill, each of which has a uniform stationary prior distribution.
%
\item{\textbf{Explore, Discover and Learn (EDL)} \cite{campos2020explore}}: 
EDL overcomes the limitation of existing variational empowerment methods which provide a poor coverage of the state space.
Unlike RIG which makes use of the current goal-conditioned policy to approximate the state distribution, 
EDL utilizes a \textit{fixed} uniform distribution over all $\mathcal{S}$ which would require an oracle sampler from the set of valid states.
If the oracle is unavailable, an exploration policy is employed to induce the uniform distribution across valid states. The skill (or goal) distribution is inferred from this uniform state distribution by using a VAE. Then, the state-covering policy is trained based on the learned skill distribution following the variational empowerment objective.
%
\item{\textbf{Skew-Fit} \cite{pong2020skew}}: 
Skew-Fit aims to achieve a general-purpose policy that can accomplish new user-specified goals, in an unsupervised manner. To achieve such goal-conditioned policy, Skew-Fit estimates the visited state distribution like RIG, and skews this distribution with the negative exponent so that the skewed distribution converges to a uniform distribution over states. Under the assumption that the goal space is equivalent to the state space, goals are sampled from the skewed distribution when training, which implies that the goal distribution is non-stationary as the visited state space gets larger.
\end{itemize}

\section{Implementation Details and Hyperparameters}
\label{C_Imple_Details}


For all experiments, the agents are trained with SAC \cite{haarnoja2018soft} with an automatically tuned entropy coefficient. During the training of the RL agent, we relabel transitions with goals by sampling from the curriculum goal distribution with probability 0.5 and the future goals with probability 0.3. We use $\beta$-VAE for both modeling a state density and computing an intrinsic reward, $\log q_\lambda(g|s)$. For DIAYN, we use a fixed set of 100 skills. To evaluate DIAYN's goal-reaching performance, we estimate the target skill from the desired goal using a discriminator. This estimated skill is then used as the goal for goal-conditioned policy. For RIG and Skew-Fit in \textit{Sawyer} experiments, we use hyperparameters inherited from the official implementation of Skew-Fit \cite{pong2019rlkit}\footnote{\href{https://github.com/rail-berkeley/rlkit}{\url{github.com/rail-berkeley/rlkit}}}. For the \textit{PointMaze} and \textit{Fetch} experiments, we add exploration noise into the action after a goal is reached by following the Go-Explore \cite{ecoffet2019go} and normalize the observations using a running mean and standard deviation. For the \textit{Sawyer} experiment, we normalize the image observations to be in the interval $[0, 1]$ by dividing by the maximum pixel intensity. Normalization is especially crucial for training $\beta$-VAE on all environments.
For training an exploration policy in EDL, we use the entropy of the marginal state distribution \cite{co2018self, islam2019marginalized} as a reward to encourage the agent to visit less visited states more.

The training time on a single NVIDIA Quadro 8000 GPU can range from 6 to 30 hours depending on the task and the situation.





\begin{table*}[h]
    \centering
    \begin{tabular}{c|c}
        \toprule
        \textbf{Hyperparameter} & \textbf{Value} \\ 
        \midrule
        Discount factor & 0.98  \\
        Replay buffer size & 1000000 \\
        Episode length & 50  \\
        RL batch size & 2048   \\
        Observation normalization & \{\textbf{Yes}, No\}   \\
        Polyak averaging coefficient for target networks & \{0.001, \textbf{0.005}\} \\
        Policy hidden activation & ReLU  \\
        Policy learning rate & \{0.0003, \textbf{0.001}, 0.003\}  \\
        Q-Function hidden activation & ReLU  \\
        Q-Function learning rate & \{0.0003, \textbf{0.001}, 0.003\}  \\
        Ensemble size for quantifying value uncertainty & \{\textbf{3}, 5, 7\}  \\
        VAE batch size & 256  \\
        VAE latent dimension size & 2  \\ 
        VAE encoder activation & ReLU  \\
        VAE decoder activation & ReLU  \\
        VAE learning rate & \{0.0003, \textbf{0.001}, 0.003\}  \\
        $\beta$ for $\beta$-VAE & \{5, \textbf{10}, 20\} \\
        $\alpha$ for Skew & -1  \\
        \bottomrule
    \end{tabular}
    \hspace{1cm}
    \caption{General hyperparameters used for all \textit{PointMaze} and \textit{Fetch} experiments. Values between brackets are tuned independently using a grid search.}    
    \label{tab:hyparam_point_fetch}
\end{table*}

\begin{table*}[h]
    \centering
    \begin{tabular}{c|c|c|c}
        \toprule
        \textbf{Hyperparameter} & \textbf{all \textit{PointMaze}} & \textbf{\textit{FetchPush/PickAndPlace}} & \textbf{\textit{FetchSlide}} \\ 
        \midrule
        Minimum \# steps in replay buffer before training & 5000 & \{5000, \textbf{20000}, 50000\} &  \{5000, 20000, \textbf{50000}\} \\
        \bottomrule
    \end{tabular}
    \hspace{1cm}
    \caption{Specific hyperparameters for all \textit{PointMaze} and \textit{Fetch} experiments. Values between brackets are tuned independently using a grid search.} 
    \label{tab:hyparam_2}
\end{table*}


\begin{table*}[h]
    \centering
    \begin{tabular}{c|c}
        \toprule
        \textbf{Hyperparameter} & \textbf{Value} \\ 
        \midrule
        Discount factor & 0.99   \\
        Replay buffer size & 100000 \\
        RL batch size & 1024   \\
        Policy hidden activation & ReLU  \\
        Policy learning rate & 0.001  \\
        Q-Function hidden activation & ReLU  \\
        Q-Function learning rate & 0.001  \\
        Ensemble size for quantifying value uncertainty & 3  \\
        VAE batch size & 64  \\
        \bottomrule
    \end{tabular}
    \hspace{1cm}
    \caption{General hyperparameters used for all \textit{Sawyer} experiments.} 
    \label{tab:hyparam_3}
\end{table*}

\begin{table*}[h]
    \centering
    \begin{tabular}{c|c|c|c}
        \toprule
        \textbf{Hyperparameter} & \textbf{\textit{SawyerDoorHook}} & \textbf{\textit{SawyerPickup}} & \textbf{\textit{SawyerPush}} \\ 
        \midrule
        Episode length & 100 & 50 & 50 \\
        VAE latent dimension size & 16 & 16 & 4 \\ 
        $\beta$ for $\beta$-VAE & 20 & 30 & 20 \\
        $\alpha$ for Skew & -0.5 & -1 & -1 \\
        \bottomrule
    \end{tabular}
    \hspace{1cm}
    \caption{Specific hyperparameters for the \textit{Sawyer} experiments.} 
    \label{tab:hyparam_4}
\end{table*}



\section{Additional Experiments}
\label{D_Add_Exps}
\subsection{Ablation Study: Ensemble Size}
To study the robustness of the ensemble size for quantifying value uncertainty, we compare the ensemble size of 3, 5, and 7 in \textit{PointMazeSquareLarge} whose results are shown in Figure \ref{fig:ablation_ensemble}.
As illustrated in Figure \ref{fig:ablation_ensemble}, the performance of VUVC with different ensemble size does not differ substantially.

\begin{figure}[h]
    \centering
    \includegraphics[width=0.4\linewidth]{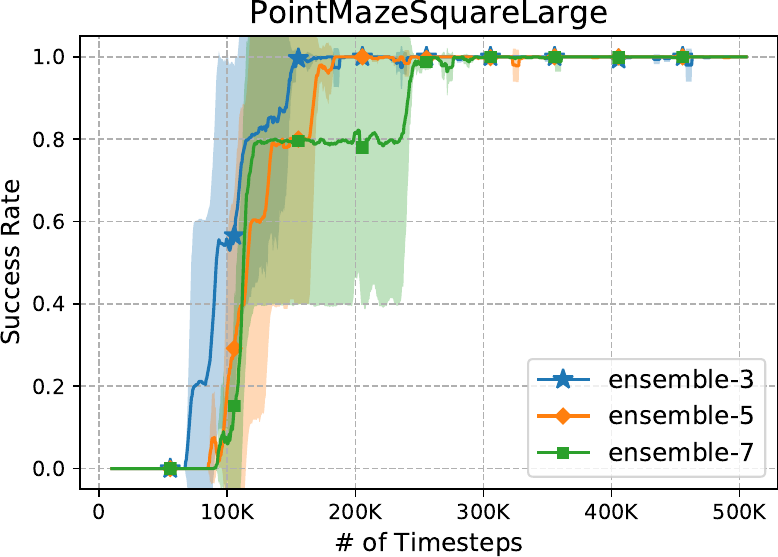}
    \caption{Learning curves for configuration-based point maze navigation task when the ensemble size is 3, 5, and 7, respectively. \textit{Mean (SD)} of success rate over 5 random seeds are reported.}
    \label{fig:ablation_ensemble}
\end{figure}

\subsection{Real-World Robot Experiments} \label{sec:real_exp_setup}

\paragraph{Setup}

The training of our navigation policy is performed using an OpenAI-gym-compatible simulator that we specially design to integrate it into the robot operating system (ROS). We generate an indoor map \cite{kastner2022all} of size 25m $\times$ 25m, as shown in Figure \ref{fig:robot_nav_sim}, and simulate a robot with a 360$^{\circ}$ field of view 2D LiDAR sensor with a resolution set to 512.\footnote{Simulator is available at \href{https://github.com/leekwoon/nav-gym}{\url{github.com/leekwoon/nav-gym}}} We also pre-define a collision threshold, where the agent would be notified of a collision if the sensor measurement is within the threshold. The navigation policy samples an action $a_t \in \mathbb{R}^2$, consisting of linear velocity $v_t \in [0.0, 0.5]$ and angular velocity $w_t \in [-0.64, 0.64]$, at 5Hz. 


We illustrate an example of how VUVC can be used to assist a 2D mobile robot with its navigation task using a laser sensor whose training environment and results are shown in Figure \ref{fig:robot_nav_sim}. By using VUVC, the robot is able to extensively cover the state space, eventually reaching the full state space that would have otherwise been impossible to achieve without detouring through long-range navigation.
In this environment, our method successfully discovers skills  which enable a robot to complete a real-world navigation task in a zero-shot setup, and which can be incorporated with a global planner that further expands reachable distance.

To evaluate the performance of the trained policy on a real robot, we deploy skills on a real Husky A200 mobile robot in the building which is depicted in the left of Figure \ref{fig:real_exp_setup}. The local goal $g_t$ is selected to be $d_{\mathrm{local}}$ away from the robot on the trajectory generated by the global planner, which utilizes the A* algorithm, at 2Hz. To clarify, we update a global plan from the A* algorithm where $g_t$ is selected, and pass $g_t$ to the goal-conditioned policy. We evaluate the performance of the trained policy in the simulator using two key metrics: 1) the robot’s success in reaching the target goal and 2) the time required to traverse to it, as shown at the bottom of Figure \ref{fig:real_exp_result}. We compare these metrics for skills with and without a global planner. For navigation skills without a global planner, a fixed target goal is given, while for skills with a global planner, $g_t$ is given. In a zero-shot setup, the robot is able to successfully navigate to two target goals (red and green stars) located 13m and 22m away from its initial position, respectively, without the use of a global planner. When assisted by a global planner, the robot is able to reach a farthest goal (a blue star) located 31m away from the initial position. Notably, the traverse time to the closest and intermediate goals are reduced from 43 seconds to 33 seconds and from 101 seconds to 67 seconds, respectively. These results demonstrate the effectiveness of our approach in the real-world.

\begin{figure*}[h]
    \centering
    \includegraphics[width=0.9\linewidth]{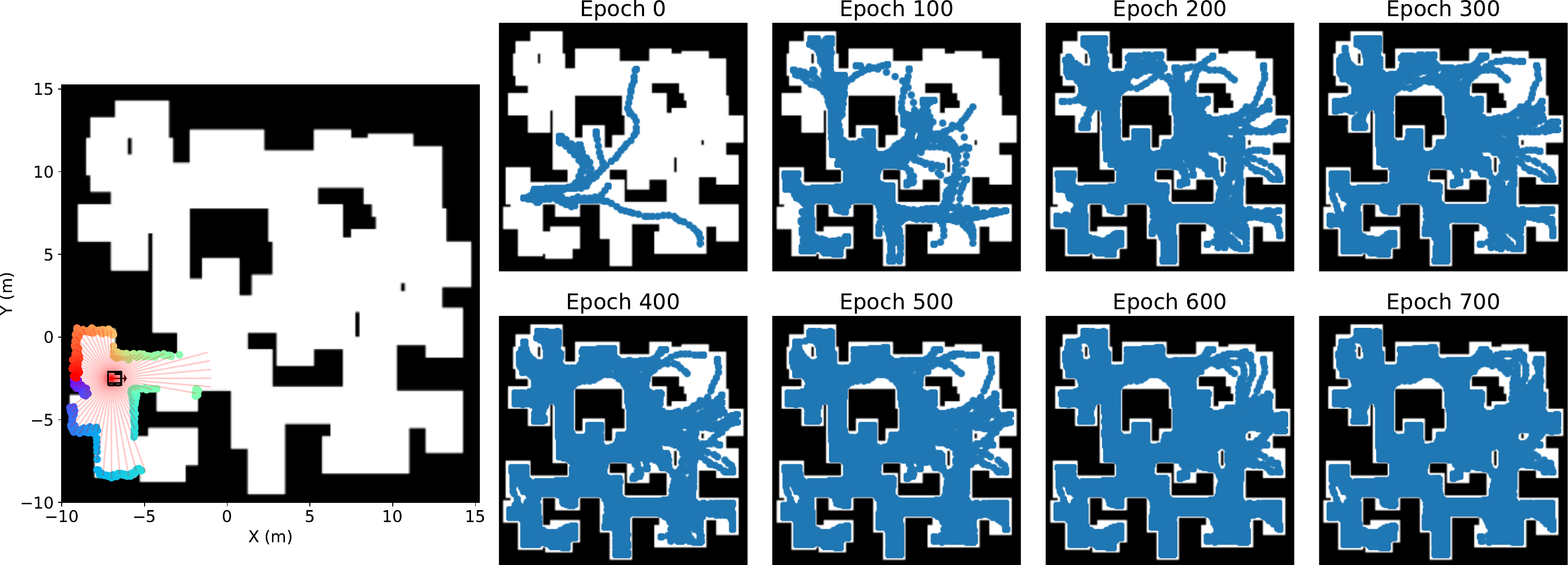}
    \caption{A simulation environment for training a mobile robot (left) and accumulated visited states for every 100 epoch (right). We illustrate an example of how VUVC can be used to assist a 2D mobile robot with its navigation task using a laser sensor. By using VUVC, the robot is able to cover more of the state space, eventually reaching the full state space that would have otherwise been impossible to achieve without detouring through long-range navigation.}
    \label{fig:robot_nav_sim}
\end{figure*}

\clearpage
\subsection{Additional Results}
\label{D_3_Add_Res}
Figure \ref{fig:a_density}$\sim$\ref{fig:sawyerpush_density} demonstrate the curriculum goal distribution and how it changes in the point navigation environments as well as in the robotic manipulation environments which we omit due to space constraint.



\begin{figure}[h]
    \centering
    \includegraphics[width=0.57\linewidth]{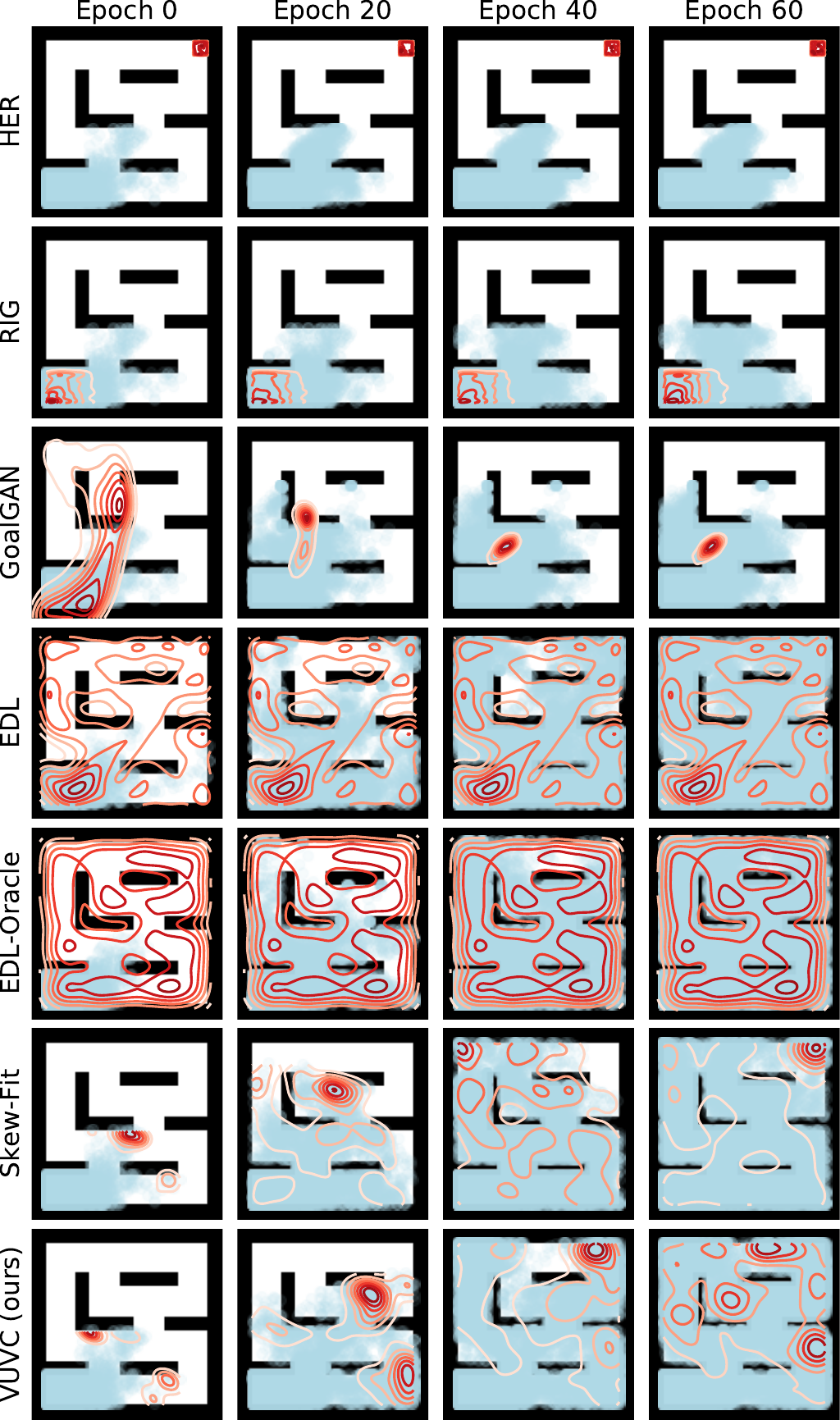}
    \caption{(\textit{PointMazeA}) Curriculum goal distribution and accumulated visited states for a fixed seed for each method. The red contour line illustrates the curriculum goal distribution and cyan dots represent visited states by the agent.}
    \label{fig:a_density}
\end{figure}

\begin{figure}
    \centering
    \includegraphics[width=0.57\linewidth]{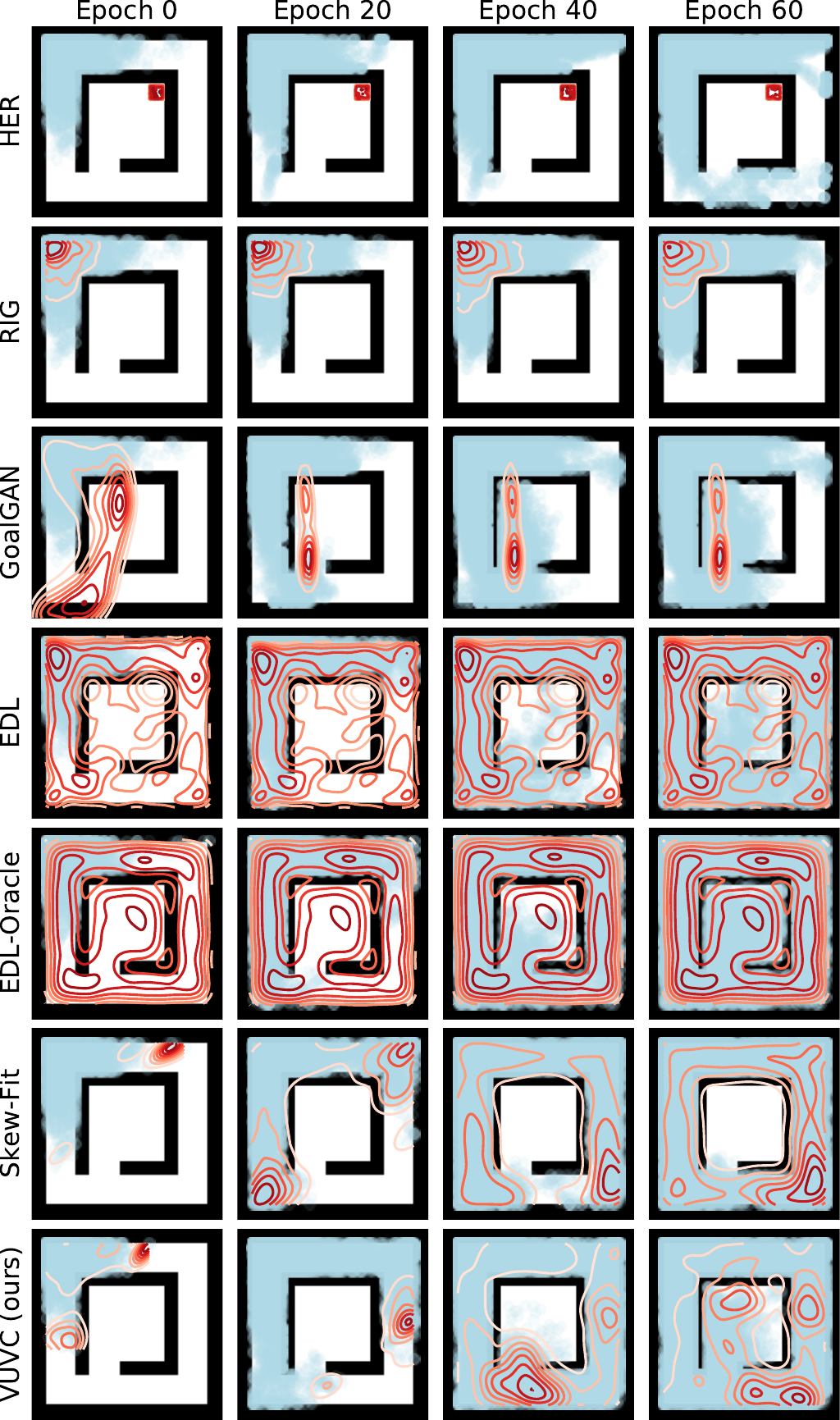}
    \caption{(\textit{PointMazeB}) Curriculum goal distribution and accumulated visited states for a fixed seed for each method. The red contour line illustrates the curriculum goal distribution and cyan dots represent visited states by the agent.}
    \label{fig:b_density}
\end{figure}

\begin{figure}
    \centering
    \includegraphics[width=0.57\linewidth]{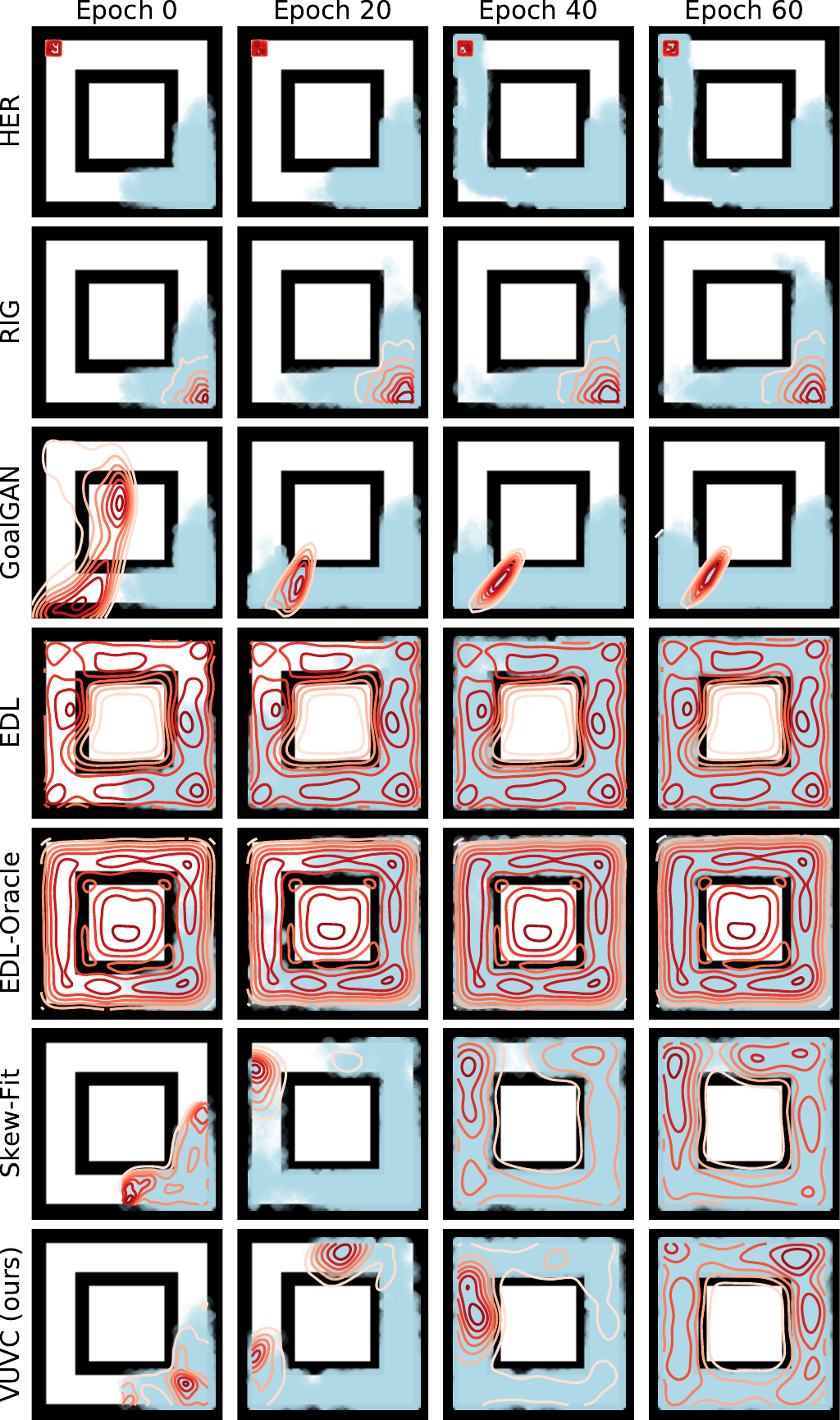}
    \caption{(\textit{PointMazeC}) Curriculum goal distribution and accumulated visited states for a fixed seed for each method. The red contour line illustrates the curriculum goal distribution and cyan dots represent visited states by the agent.}
    \label{fig:c_density}
\end{figure}

\begin{figure}
    \centering
    \includegraphics[width=0.9\linewidth]{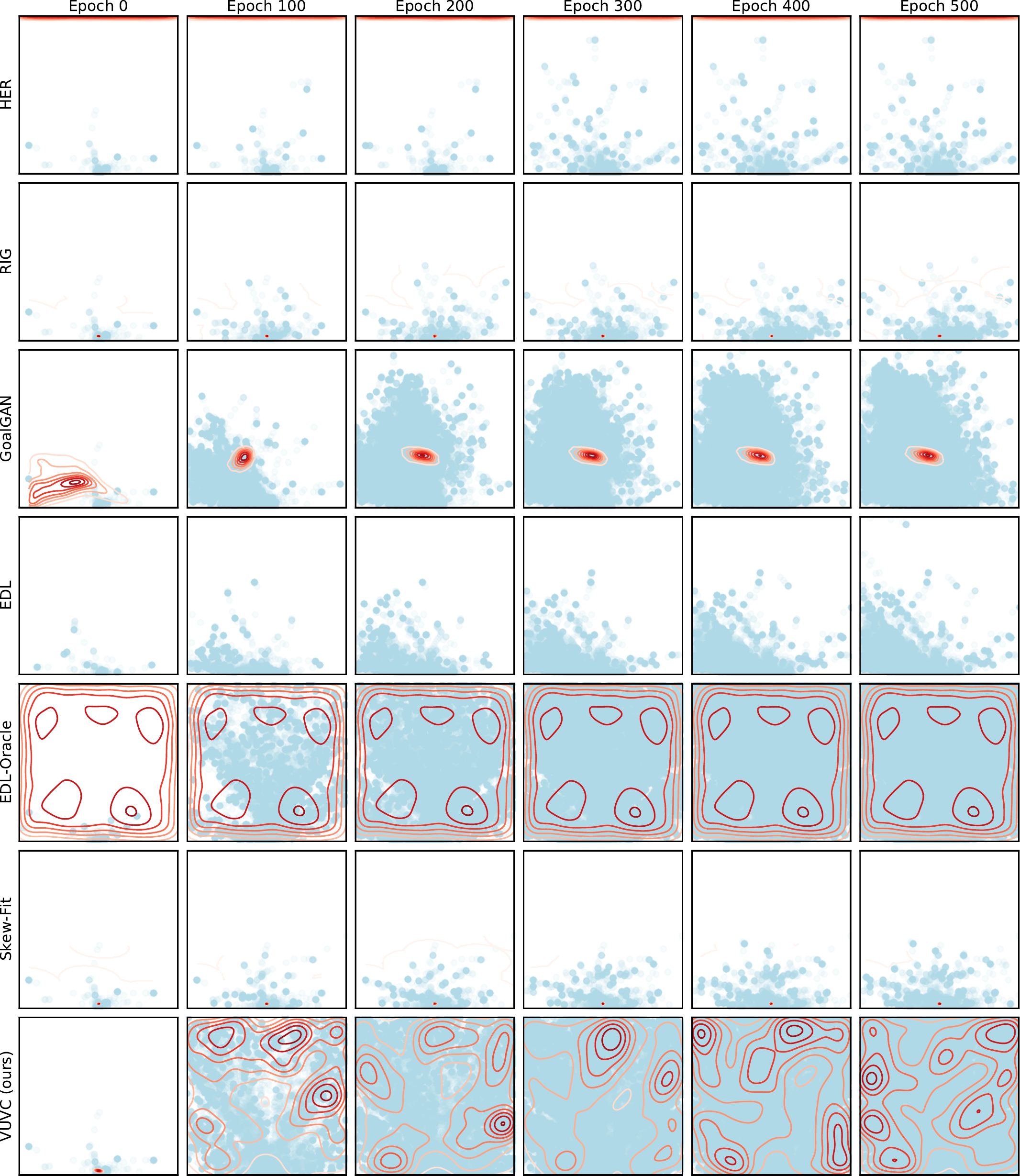}
    \caption{(\textit{FetchPush}) Curriculum goal distribution and accumulated visited states for a fixed seed for each method. The red contour line illustrates the curriculum goal distribution and cyan dots represent visited states by the agent.}
    \label{fig:fetchpush_density}
\end{figure}

\begin{figure}
    \centering
    \includegraphics[width=0.9\linewidth]{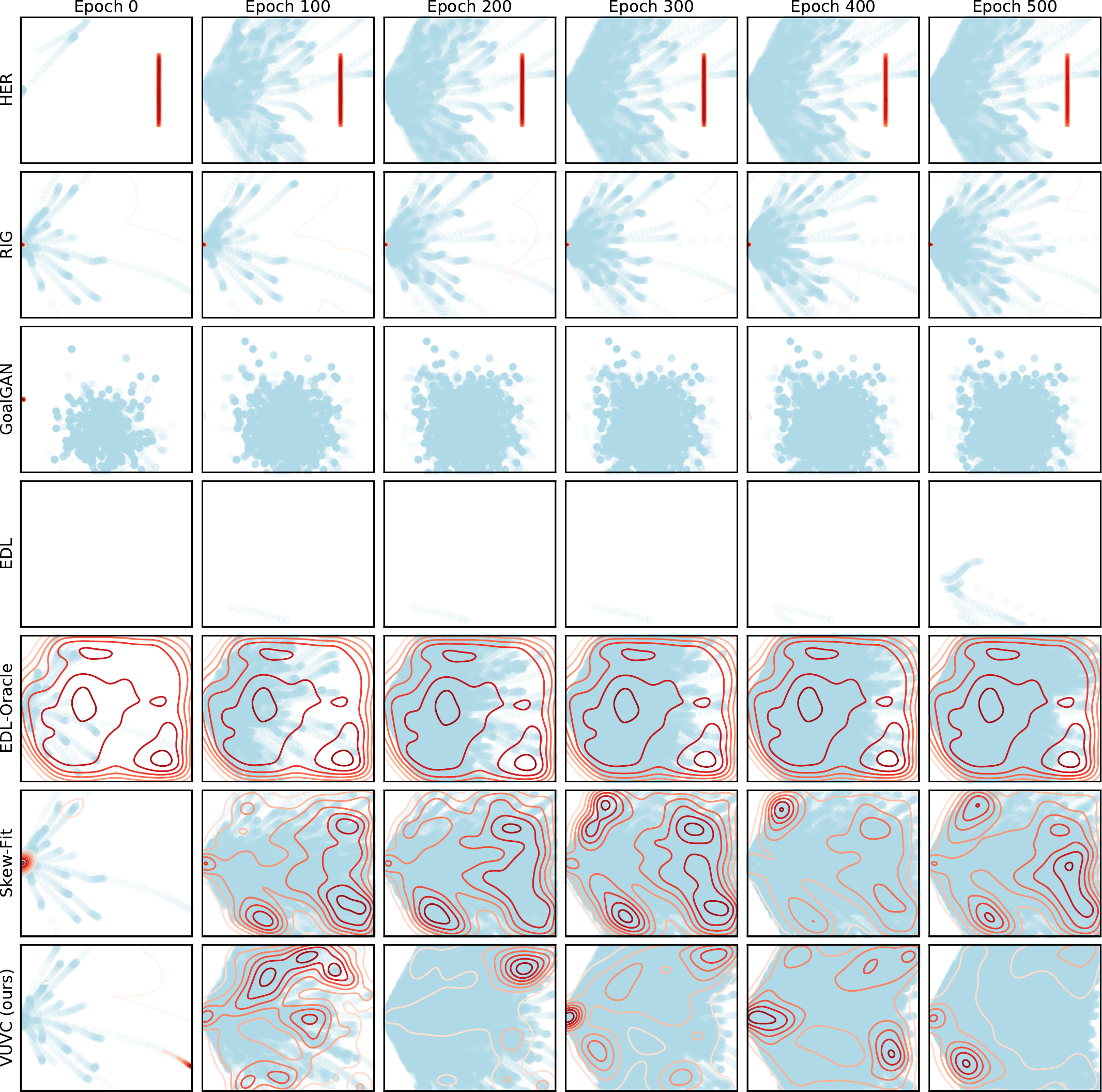}p
    \caption{(\textit{FetchSlide}) Curriculum goal distribution and accumulated visited states for a fixed seed for each method. The red contour line illustrates the curriculum goal distribution and cyan dots represent visited states by the agent.}
    \label{fig:fetchslide_density}
\end{figure}

\begin{figure}
    \centering
    \includegraphics[width=0.9\linewidth]{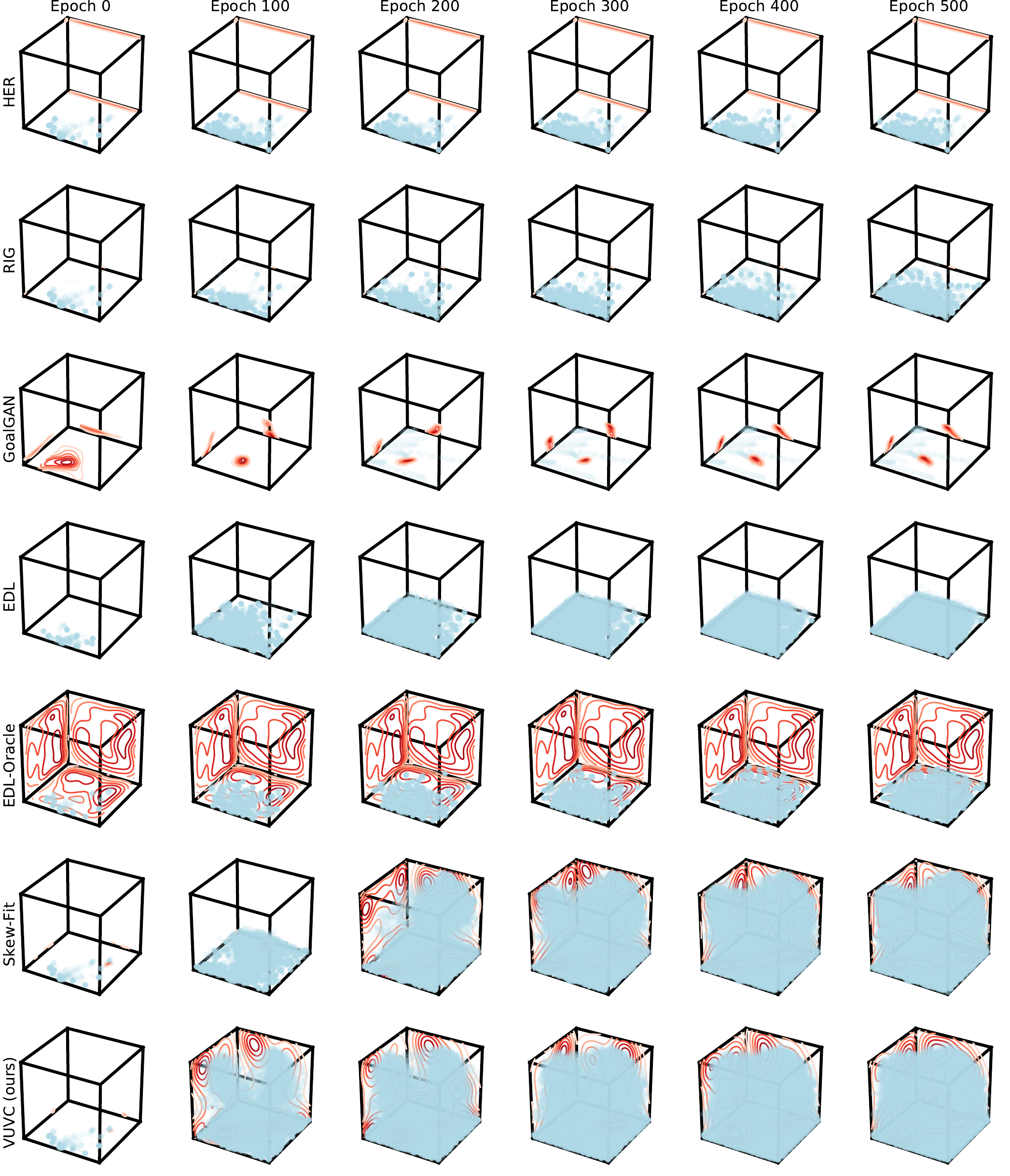}
    \caption{(\textit{FetchPickAndPlace}) Curriculum goal distribution and accumulated visited states for a fixed seed for each method. The red contour line illustrates the curriculum goal distribution and cyan dots represent visited states by the agent.}
    \label{fig:fetchpickandplace_density}
\end{figure}

\begin{figure}
    \centering
    \includegraphics[width=0.9\linewidth]{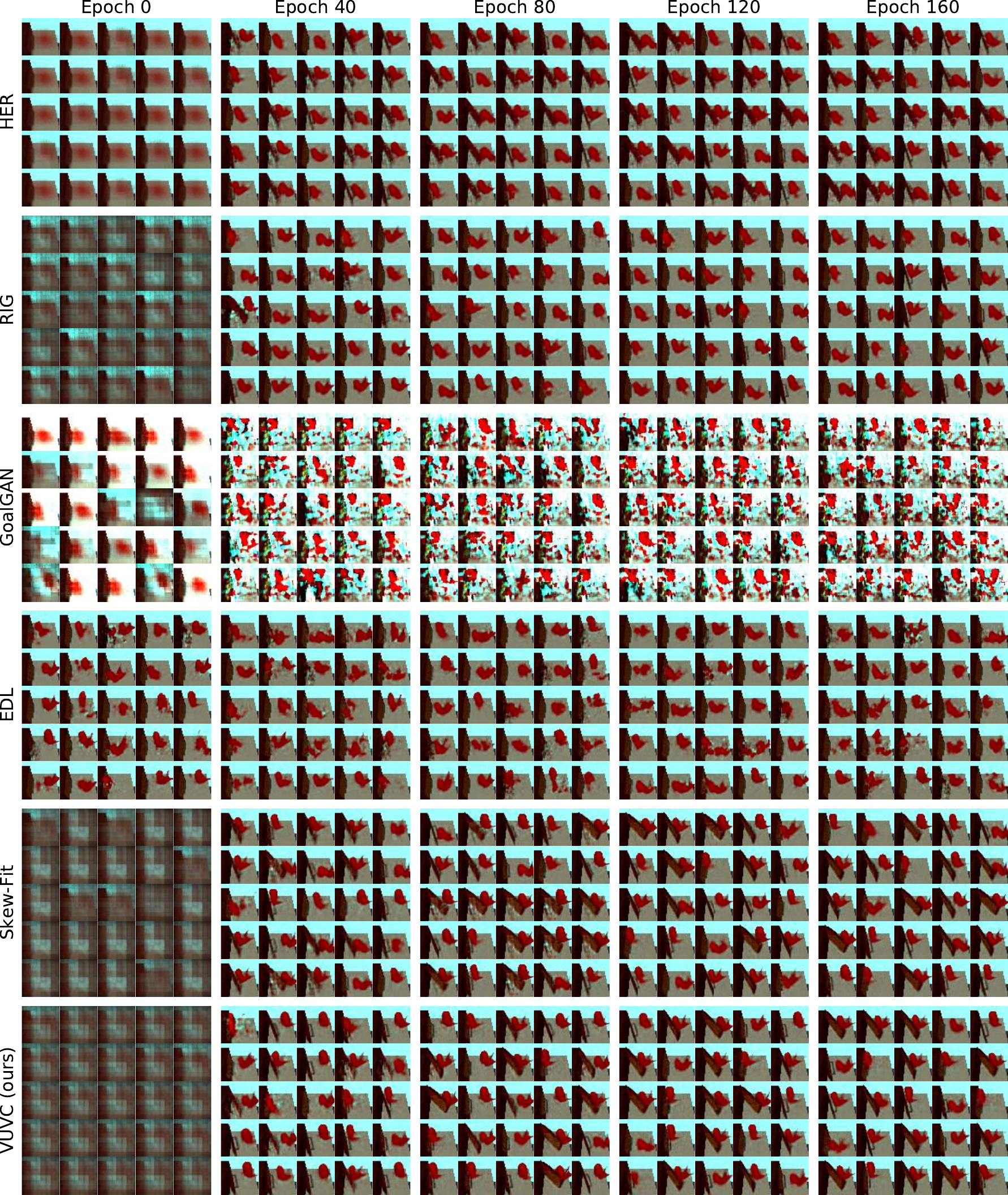}
    \caption{(\textit{SawyerDoorHook}) Examples of curriculum goals for a fixed seed for each method. Latent codes are given as a curriculum goal and their reconstructed images are illustrated for visualization.}
    \label{fig:sawyerdoor_density}
\end{figure}

\begin{figure}
    \centering
    \includegraphics[width=0.8\linewidth]{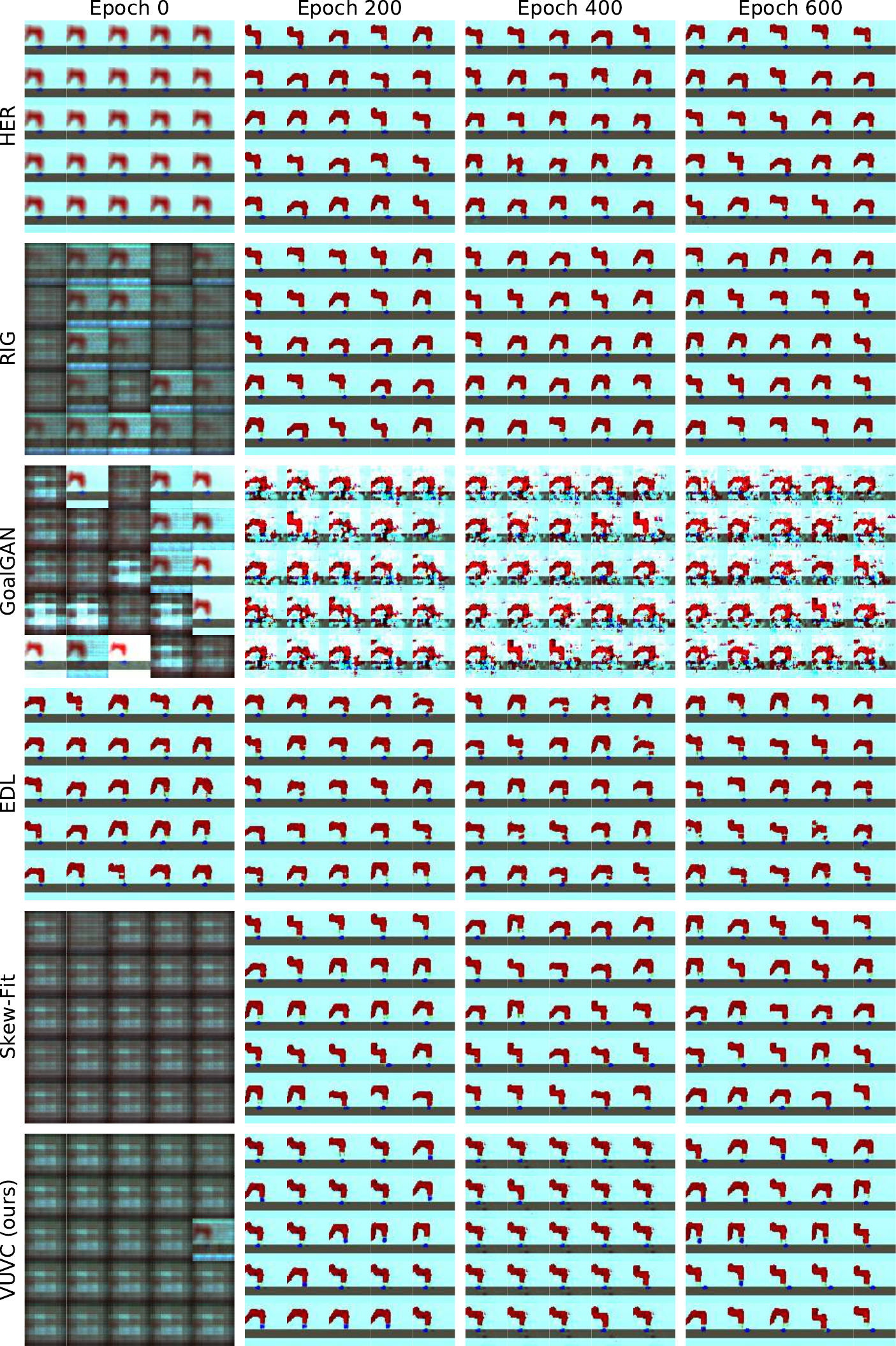}
    \caption{(\textit{SawyerPickup}) Examples of curriculum goals for a fixed seed for each method. Latent codes are given as a curriculum goal and their reconstructed images are illustrated for visualization.}
    \label{fig:sawyerpickup_density}
\end{figure}

\begin{figure}
    \centering
    \includegraphics[width=0.9\linewidth]{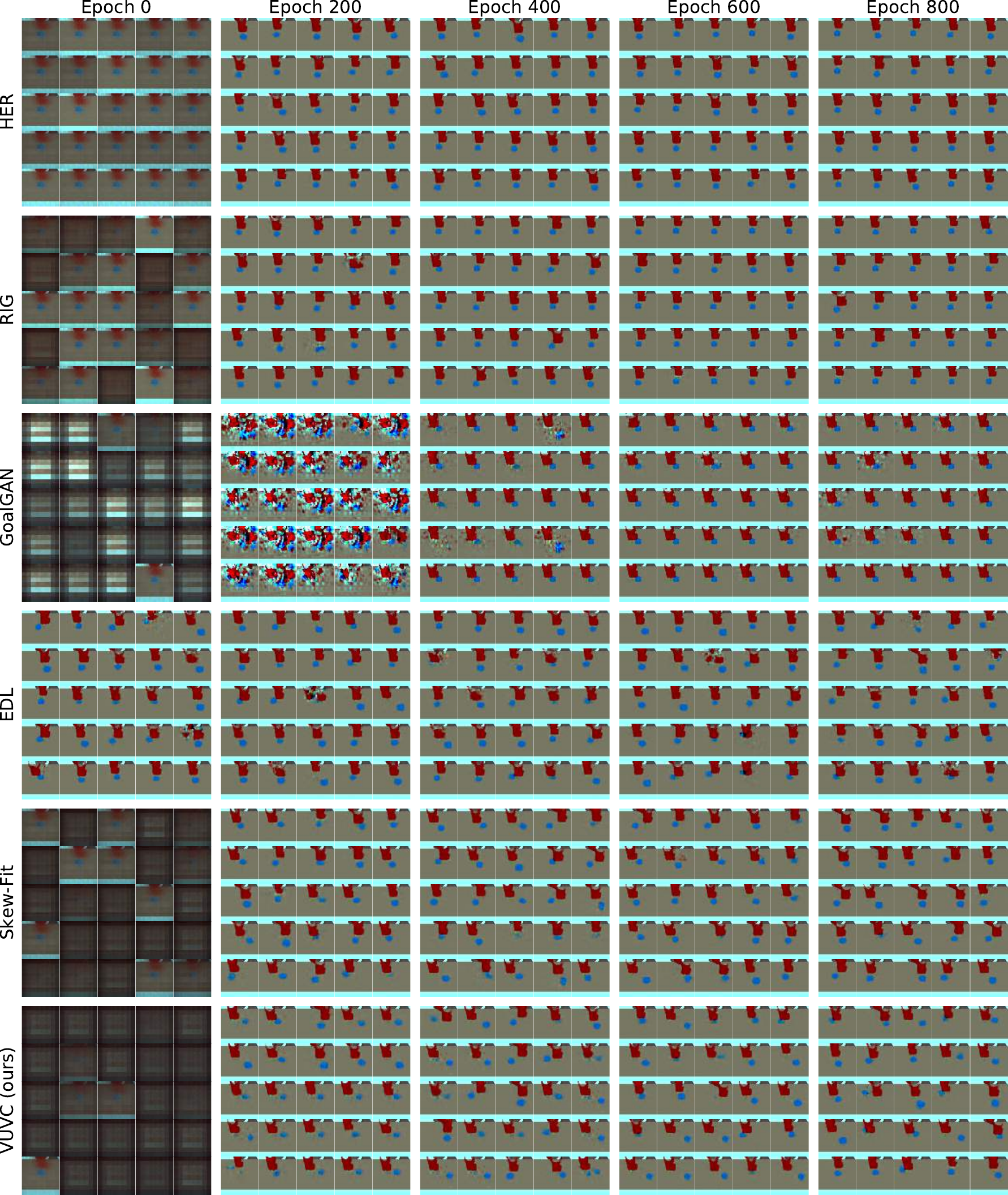}
    \caption{(\textit{SawyerPush}) Examples of curriculum goals for a fixed seed for each method. Latent codes are given as a curriculum goal and their reconstructed images are illustrated for visualization.}
    \label{fig:sawyerpush_density}
\end{figure}




%



\end{document}